\theoremstyle{plain}
\newtheorem{definition}{Definition}
\newtheorem{property}{Property}
\newtheorem{theorem}{Theorem}
\newcommand{\cmark}{\ding{51}} 
\newcommand{\xmark}{\ding{55}} 
\definecolor{dodgerblue}{RGB}{74,134,232} 
\icmltitlerunning{Explainable Concept Generation through Vision-Language Preference Learning}
\begin{document}

\twocolumn[
\icmltitle{Explainable Concept Generation through Vision-Language Preference Learning for Understanding Neural Networks' Internal Representations}



\icmlsetsymbol{equal}{*}

\begin{icmlauthorlist}
\icmlauthor{Aditya Taparia}{yyy}
\icmlauthor{Som Sagar}{yyy}
\icmlauthor{Ransalu Senanayake}{yyy}
\end{icmlauthorlist}

\icmlaffiliation{yyy}{School of Computing and Augmented Intelligence, Arizona State University, Tempe, United States of America}
%
%

\icmlcorrespondingauthor{Aditya Taparia}{ataparia@asu.edu}

\icmlkeywords{Machine Learning, ICML, Concept based Explainable AI, TCAV, Vision-Language Models, Reinforcement Learning, Preference Optimization}

\vskip 0.3in
]



\printAffiliationsAndNotice{}  

\begin{abstract}
Understanding the inner representation of a neural network helps users improve models. Concept-based methods have become a popular choice for explaining deep neural networks post-hoc because, unlike most other explainable AI techniques, they can be used to test high-level visual ``concepts'' that are not directly related to feature attributes. For instance, the concept of ``stripes’’ is important to classify an image as a zebra. Concept-based explanation methods, however, require practitioners to guess and manually collect multiple candidate concept image sets, making the process labor-intensive and prone to overlooking important concepts. Addressing this limitation, in this paper, we frame concept image set creation as an image generation problem. However, since naively using a standard generative model does not result in meaningful concepts, we devise a reinforcement learning-based preference optimization (RLPO) algorithm that fine-tunes a vision-language generative model from approximate textual descriptions of concepts. Through a series of experiments, we demonstrate our method's ability to efficiently and reliably articulate diverse concepts that are otherwise challenging to craft manually.
Github: \url{https://github.com/aditya-taparia/RLPO}
\end{abstract}

\section{Introduction}
In an era where black box deep neural networks (DNNs) are becoming seemingly capable of performing complex tasks, our ability to understand their internal representations post-hoc has become even more important before deploying them in the real world. Among many use cases, such revelations help engineers in further improving models and regulatory bodies in assessing their correctness. To help these users, developing methods that communicate information in a human-centric way is essential for ensuring usefulness.

Humans utilize high-level concepts as a medium for providing and perceiving explanations. In this light, post-hoc concept-based explanation techniques, such as Testing with Concept Activation Vectors (TCAV)~\citep{kim2018interpretability}, have gained great popularity in recent years. Their ability to use abstractions that are not necessarily feature attributes or some pixels in test images helps with communicating these high-level concepts with humans. For instance, as demonstrated in TCAV, the concept of stripes is important to explain why an image is classified as a zebra, whereas the concept of spots is important to explain why an image is classified as a jaguar. Given 1) a set of such high-level concepts, represented as sample images (e.g., a collection of stripe and spot images) and 2) test images of the class (e.g., zebra images), TCAV assigns a score to each concept on how well the concept explains the class decision (i.e., zebra). Although concept-based explanations are a good representation, their requirement to create collections of candidate concept sets necessitate the human to know which concepts to test for. This is typically done by guessing what concepts might matter and manually extracting such candidate concept tests from existing datasets. While the stripe-zebra analogy is attractive as an example, where it is obvious that stripes is important to predict zebras, in most applications, we cannot guess what concepts to test for, limiting the usefulness of concept-based methods in testing real-world systems. Additionally, even if a human can guess a few concepts, it does not encompass most concepts a DNN has learned because the DNN was trained without any human intervention. Therefore, it is important to automatically find concepts that matter to the DNN's decision-making process. 

\begin{figure}[htbp]
    \centering
    \includegraphics[width=0.32\textwidth]{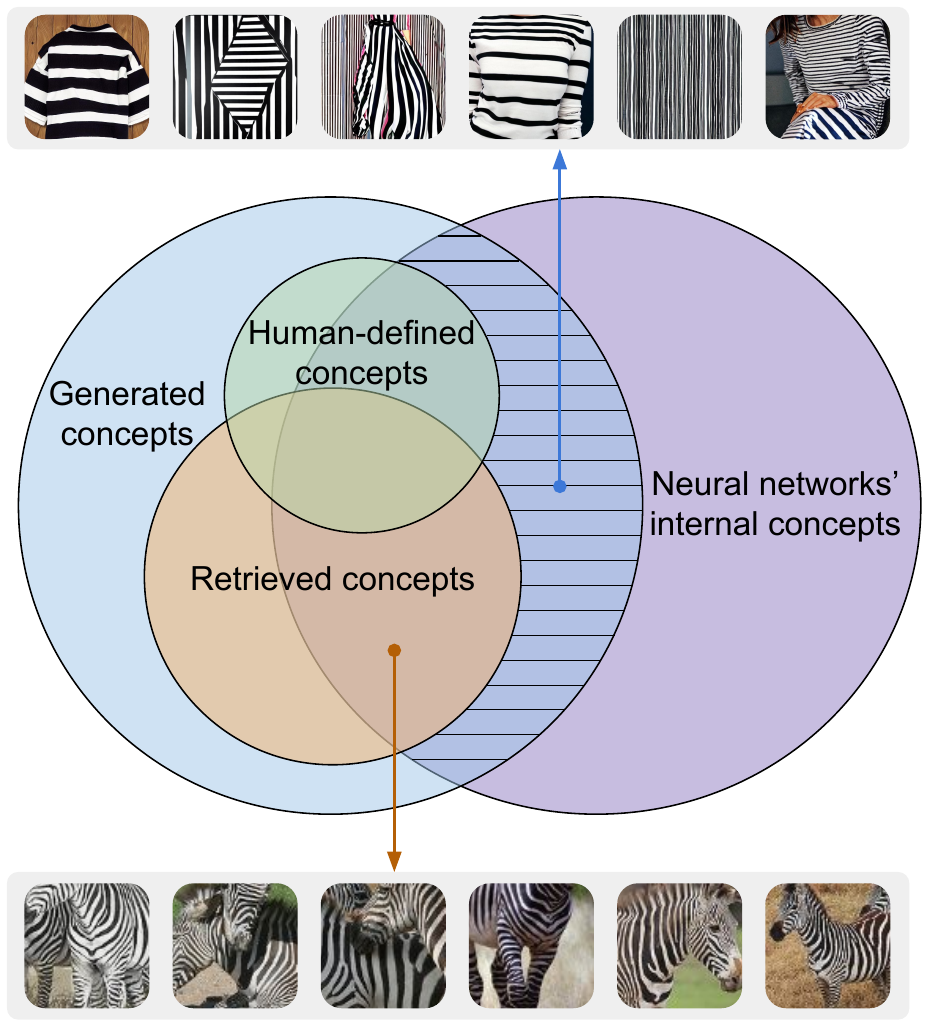}
    \caption{Humans can imagine (green) a few concepts to understand neural networks' representations (purple). Some other concepts can be retrieved from test images themselves through, for instance, segmentation (orange). However, if we generate concepts (blue), they will capture even a broader set of concepts. RLPO is designed to make this generation process more targeted toward neural networks' representations (purple $\cap$ blue).}
    \label{fig:venn}
    \vspace{-12pt}
\end{figure}
\begin{figure*}[htbp]
    \centering
    \includegraphics[width=0.8\textwidth]{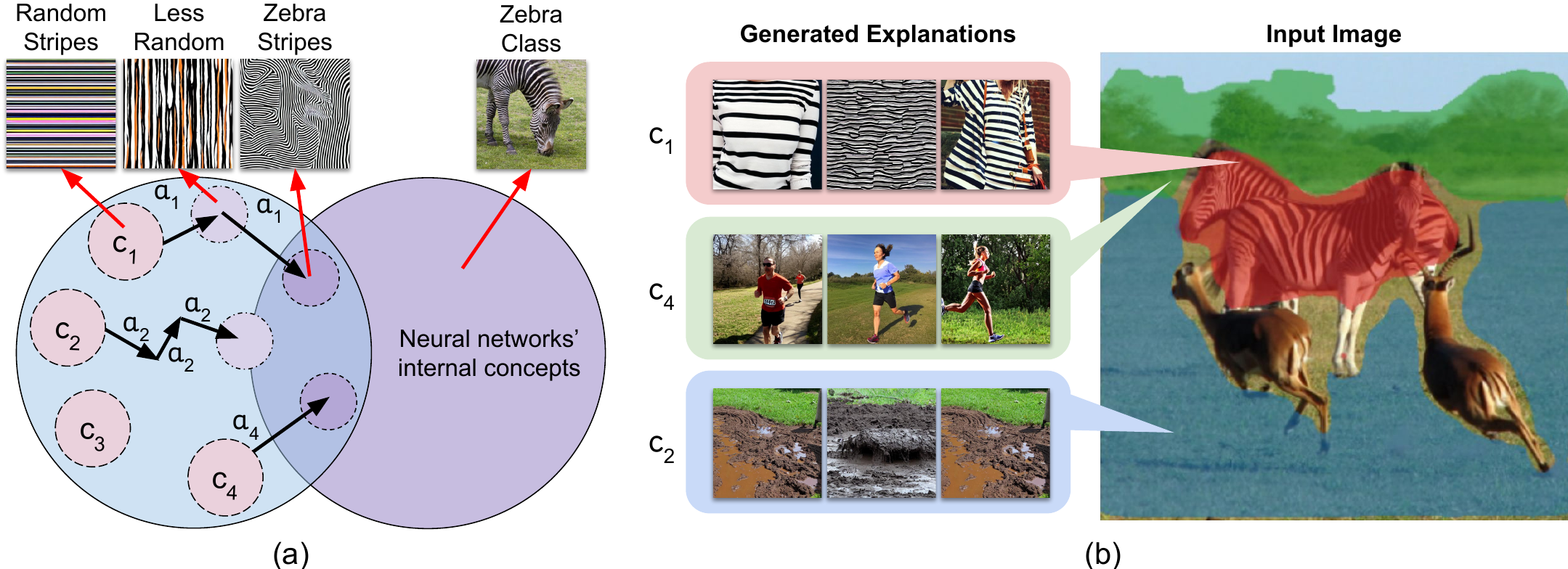}
    \vspace{-0.5em}
    \caption{(a) Our proposed algorithm, RLPO, iteratively refines the concepts $c_i$ that are generated by a Stable Diffusion (SD) model by optimizing SD weights based on an action $a_i$. Each step in this update process provides an explanation at a different level of abstraction (Appendix~\ref{app:abstract_concept}). Please refer to the supplementary video for better understanding. (b) Three concepts identified by our approach for the zebra class. Concepts are represented as images generated by SD.}
    \label{fig:rl_space}
    \vspace{-12pt}
\end{figure*}

As attempts to automatically discover and create such concept sets, several works has focused on segmenting the image and using these segments as potential concepts, either directly~\citep{ghorbani2019towards} or through factor analysis~\citep{fel2023craft, fel2024holistic}. In such methods, which we refer to as retrieval methods, because the extracted concept set is already part of the test images as shown in Fig.~\ref{fig:venn} (Retrieved concepts), it is difficult for them to imagine concepts that do not have a direct pixel-level resemblance to the original image class. For instance, it is more likely that such methods provide patches of zebra as concepts instead of stripes.

By departing from existing concept set creation practices of human handcrafting and retrieval, we redefine concept set creation as a concept generation problem. Modern generative models such as stable diffusion (SD) can be used to generate realistic images. Nevertheless, since a generative model generates arbitrary images, we need to carefully guide the SD to generate explanatory images. One obvious approach is to engineer long, descriptive text prompts to generate concepts. However, engineering such prompts is not realistic. Therefore, to automate this process, as shown in Fig.~\ref{fig:rl_space}, we propose a method, named reinforcement learning-based preference optimization (RLPO), to guide SD model. At its core, we devised a deep reinforcement learning algorithm, which gradually update SD weights to generate concept images that have a higher explanation score. The contributions of this paper can be summarized as follows: 
\vspace{-10pt}
\begin{enumerate}\setlength\itemsep{0em}
    \item We propose a method, named RLPO, to ``generate'' concepts that truly matter to the neural network (refer to the video on github). These include concepts that are challenging for humans or retrieval methods to anticipate (Fig.~\ref{fig:venn}).
    \item We qualitatively (computational metrics and human surveys) and quantitatively demonstrate that RLPO can reveal networks' internal representations. 
    \item Additionally, as use cases, we show how generated concepts can be utilized to improve models, understand model bias, and also demonstrate the generalizability of RLPO on tasks such as NLP sentiment analysis.

\end{enumerate}

Notably, due to the modularity of our approach, any component in our framework (e.g., SD) can be seamlessly replaced with an improved version (e.g., SD3~\cite{esser2024scaling} or Flash Diffusion~\cite{chadebec2025flash}).

\section{Preliminaries and Related Work}

{\bf Testing with Concept Activation Vectors (TCAV)}: 
The TCAV score quantifies the importance of a ``concept'' for a specific class in a DNN classifier~\citep{kim2018interpretability}. Here, a concept is defined broadly as a high-level, human-interpretable idea such as stripes, sad faces, etc. A concept (e.g., stripes), $c$, is represented by sample images, $X_c$ (e.g., images of stripes). In TCAV, a human has manually collected these sample concept images based on educated guesses, whereas our objective is to automatically generate them. For a given set of test images, $X_m$ (e.g., zebra images), that belong to the same decision class (e.g., zebra), $m$, TCAV is defined as the fraction of test images for which the model's prediction increases in the ``direction of the concept.'' By decomposing the DNN under test as $f(x)=f_2(f_1(x))$, where $f_1(x)$ is the activation at layer $l$, TCAV score is computed as,
\begin{equation}
\begin{split}
    TS_{c,m} &= \frac{1}{|X_m|} \sum_{X_m} \mathbb{I} \left( \frac{\partial \text{output}}{\partial \text{activations}} \cdot {(c \enspace \text{direction})} > 0 \right) \\
    &= \frac{1}{|X_m|} \sum_{x_i \in X_m} \mathbb{I} \left( \frac{\partial f(x_i)}{\partial f_1(x_i)} \cdot v > 0 \right)
\end{split}
\end{equation}
Here, \( \mathbb{I} \) is the indicator function that counts how often the directional derivative is positive. Concept activations vector (CAV), \( v \), is the normal vector to the hyperplane that separates activations of concept images, $\{f_1(x); x \in X_c \}$, from activations of random images, $\{f_1(x); x \in X_r \}$.  Refer to Appendix~\ref{app:sub_models} for details on the TCAV settings.

ACE~\citep{ghorbani2019towards} introduced a way to automatically find concepts by extracting relevant concepts from the input class. It used segmentation over different resolution to get a pool of segments and then grouped them based on similarity to compute TCAV scores. Though the ACE concepts are human understandable, they are noisy because of the segmentation and clustering errors. As a different method, EAC~\citep{sun2024explain} extracts concepts through segmentation. CRAFT~\citep{fel2023craft} introduced a recursive strategy to detect and decompose concepts across layers. Lens~\citep{fel2024holistic} elegantly unified concept extraction and importance estimation as a dictionary learning problem. However, since all these methods obtain concepts from class images, the concepts they generate tend to be very similar to the actual class (e.g., a patch of zebra as a concept to explain the zebra, instead of stripes as a concept), making it challenging to maintain the ``high-level abstractness'' of concepts. In contrast, we generate concepts from a generative model. 

{\bf Deep Q Networks (DQN)}: 
DQN~\citep{mnih2015human} is a deep RL algorithm that combines Q-learning with deep neural networks. It is designed to learn optimal policies in environments with large state and action spaces by approximating the Q-value function using a neural network. A separate target network, $Q_\text{target}(s, a', \theta')$, Here $a'$ is $\arg\max Q(s_{next},a)$ which is a copy of the Q-network with parameters $\theta'$, is updated less frequently to provide stable targets for Q-value updates,
\begin{equation}
\label{eq:q_learn}
\begin{split}
Q(s_t, a_t) &\leftarrow Q(s_t, a_t) + \alpha \Big( r(s_t, a_t) \\
&+ \gamma \max_{a'} Q_\text{target}(s_{t+1}, a') - Q(s_t, a_t) \Big)
\end{split}
\end{equation}
Here, \( s_t \) is the state at step $t$, \( a_t \) is the action taken, and \( r_t \) is the reward obtained after taking action \( a_t \). The parameters \( \alpha \) and \( \gamma \) are learning rate and discount factor, respectively. DQNs are used for controlling robots~\cite{tang2024deep,Senanayake2024arxiv_unc,Charpentier2022arxiv}, detecting failures~\cite{Sagar2024icml}, etc.

{\bf Preference Optimization}:
Optimizing generative models with preference data was first introduced in Direct Preference Optimization (DPO)~\citep{rafailov2024direct}. It is a technique used to ensure models, such as large language models, learn to align its outputs with human preference by asking a human which of its generated output is preferred. This technique was later extended to diffusion models in Diffusion-DPO~\citep{wallace2023diffusion}, where they updated Stable Diffusion XL model using  Pick-a-Pic dataset (human preferred generated image dataset). Unlike traditional image or text generation tasks, where the dataset for human preferred outputs are readily available, it is hard to have a general enough dataset for XAI tasks. To counter this problem, we provide preference information by using the TCAV score instead of a human, and use it to align the text-to-image generative model to generate concept images that matters for the neural network under test.

{\bf Use of VLM in Explanation}: Recent advancements in Vision-Language models (VLMs) have open the doors for the use of VLMs in multiple domains, mainly because of their ability to generalize over large amount of data, they can be leveraged to obtain useful information. Work by \cite{sun2024explain} present a novel method combining the Segment Anything Model (SAM) with concept-based explanations, called Explain Any Concept (EAC). This method uses SAM for precise instance segmentation to automatically extract concept sets from images, then it employs a lightweight surrogate model to efficiently explain decision made by any neural network based on extracted concepts. Another work by Yan et al.~\cite{yan2023learning} introduced Learning Concise and Descriptive Attributes (LCDA), which leverages Large Language Models (LLMs) to query a set of attributes describing each class and then use that information with vision-language models to classify images. They highlight in their paper that with a concise set of attributes, they can improve the classifier's performance and also increase interpretability and interactivity for end user.

\section{Methodology: Reinforcement Learning-based Preference Optimization}
\label{sec:methodology}
\begin{figure*}[htbp]
    \centering
    \includegraphics[width=0.85\textwidth]{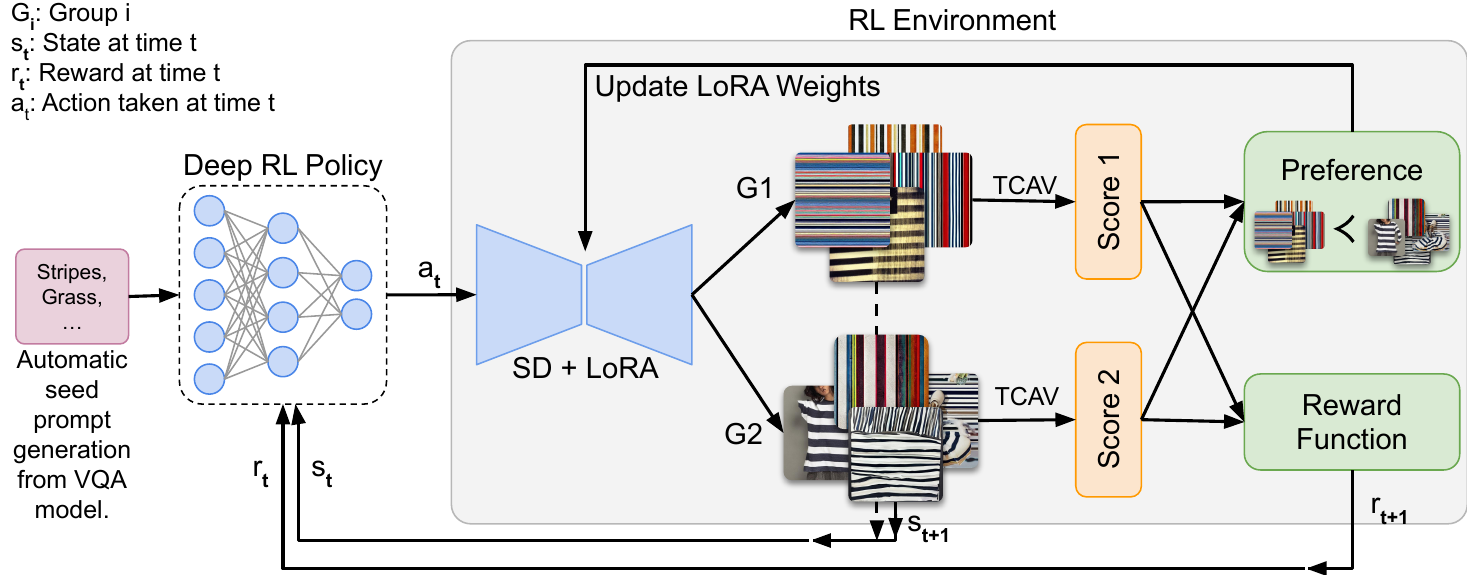}
    \caption{Overview of the RLPO framework with its dynamic environment interaction. The RL policy selects actions (seed prompts) which generates concept sets (G1, G2) scored through TCAV. Reward is calculated based on the scores obtained for both the sets. Simultaneously, best set is determined based on the scores obtained, which is used to update the LoRA layer of the SD model.}
    \vspace{-8pt}
    \label{fig:method}
\end{figure*}

Our objective is to find a set of concept images, $\mathcal{C}$, that maximize the TCAV scores, $TS_{c,m}$, indicating that the concepts are relevant to the neural networks' decision-making process. We leverage state-of-the-art text-to-image generative models to generate high-quality explainable concepts. However, because the search space of potential text prompts is too large, we use deep RL to guide the image generation process. As described in Fig.~\ref{fig:method} and Algorithm~\ref{algorithm1}, our algorithm, RLPO, uses RL to pick potential keywords from a predefined list and iteratively optimizes stable diffusion weights to generate images that have a preference for higher TCAV scores. This process is described below.

\begin{algorithm}[t]
   \caption{The RLPO algorithm. Appendix~\ref{sec:sup_rl_algorithm} for the expanded algorithm.}
   \label{algorithm1}
\begin{algorithmic}
   \STATE {\bfseries Input:} Set of test images $f(\cdot)$
   \STATE Run pre-processing and obtain seed prompts (action space)
   \FOR{each episode}
       \FOR{each time step $t$}
           \STATE Execute $a_t$ by picking a seed prompt
           \STATE Generate image groups $G_1$ and $G_2$
           \STATE Evaluate TCAV scores $TS_1$ and $TS_2$
           \STATE Update SD based on the better score
           \STATE Compute reward
       \ENDFOR
   \ENDFOR
   \STATE {\bfseries Output:} Set of concept images
\end{algorithmic}
\end{algorithm}
\textbf{Notation}: Our framework contains three core deep learning models: the network under test $f(\cdot)$, the image generator $g(\cdot)$, and the deep RL network $h(\cdot)$. First, we have a pre-trained neural network classifier that we want to explain. 
We then have a generative neural network, whose purpose is generating concept image sets, given some text prompts. 
In this paper, we use Stable Diffusion (SD) v1-5 as the generator as it is a state-of-the-art generative model that can generate realistic images. 
The core search algorithm that we train is a DQN.

\subsection{The Rationales Behind Design Choices}
We explain the design choices, which are validated through experiments in Section~\ref{sec:ablation_1}-\ref{sec:exp_4}.

{\bf Rationale 1: Why concept generation is a better idea.} 
If we use concept-based explanation the traditional way~\citep{kim2018interpretability,schut2025bridging}, then the end users need to manually guess what concepts to test for. Automatically retrieving the concept set by segmenting test images~\citep{sun2024explain} also results in a limited concept set. In contrast, a SOTA generative model can generate high quality images. We provide more theoretical insights in Appendix~\ref{app:def_theory_proves}.

{\bf Rationale 2: Why a deep RL-controlled VLM fine-tuning for generating concepts is a better idea.} \textit{``A picture is worth a thousand words but words flow easier than paint.''}
As the saying goes, ``a picture is worth a thousand words,'' it is much easier for people to explain and understand high-level concepts when images are used instead of language. For instance, we need a long textual description such as ``The circles are centered around a common point, with alternating red and white colors creating a pattern'' to describe a simple image of a dart board (i.e., Target Co. logo). Therefore, we keep our ultimate concept representation as images. However, controlling a generative model from visual inputs is much harder. Since human language can be used as a directed and easier way to seed our thought process, as the saying goes, ``words flow easier than paint,'' we control the outcome by using text prompts. Since the vastness of the search space cannot be handled by most traditional search strategies, we resort to a DQN for controlling text. Since simple text alone cannot generate complex, high-level visual concepts, in each DQN update step, we use preference optimization to further guide the search process towards a more preferred outcome, allowing the DQN to focus on states similar to the target. This approach improves our starting points for each DQN episode, enabling more efficient search and incremental progress towards the desired target. 




\subsection{Extracting Seed Prompts}
\label{sec:a_space}
Since a generative model can generate arbitrary images, if we provide good starting point for optimization then the convergence to explainable states would be faster. In this paper, to extract seed prompts for a particular class we make use of the off-the-shelf VQA model followed by several pre-processing steps, as described in Appendix~\ref{sec:sup_a_space}. We also explore how gibberish prompts can be used as seed prompts in Appendix~\ref{sec:random_prompt_exp} which did not yield useful concepts.



\subsection{Deep Reinforcement Learning Formulation}
\label{sec:drl}
Our objective of using deep RL is automatically controlling text input of Stable Diffusion. As text input, we start with $\mathcal{K}$ seed prompts from Section~\ref{sec:a_space}, that have the potential to generate meaningful concept images after many deep RL episodes. We setup our RL state-action at iteration $t$ as,
\setlist{nolistsep}
\begin{itemize}[noitemsep]
  \item \textbf{Action} \( a_t \): Selecting a seed prompt, $k_t \in \mathcal{K}$, that best influences concept image generation.
  \item \textbf{State} \( s_t \): Preferred concept images generated from the seed prompt, $k_{t-1}$.
  \item \textbf{Reward} \( r_t \): It is proportional to the TCAV score computed at state $s_t$ on action $a_t$, adjusted by a monotonically increasing scaling factor \( \xi_{t,k} \). As each seed concept reaches the explainable state at different times, this factor is introduced to scale the reward over time $t$ for each unique seed concept $k$. Since the $g(.)$ is getting optimized at each time step $t$. The scaling factor is updated as $\xi_{t+1,k} \leftarrow \min\Big(1, \frac{\xi_{t,k} + 1}{T}\Big)$, where T is total number of RL steps. Therefore, the expected cumulative adjusted reward is $R(\pi) = \mathbb{E} \Big[ \sum_{t=0}^{T} \xi_t \cdot r_t(s_t, a_t) \Big]$.
\end{itemize}
Our objective in deep RL is to learn a policy, $\pi: s \to a$, that takes actions (i.e., picking a seed prompt) leading to explainable states (i.e., correct concept images) from proxy states (i.e., somewhat correct concept images). We formally define explainable state and proxy state as follow:

\begin{definition}
Explainable states: States that have a concept score $TS_{c,m} \geq \eta$ for a user-defined threshold $\eta \in [0,1]$ for concept $c$ and class $m$ is defined as an explainable state. 
\end{definition}

\begin{definition}
Proxy states: States that have a concept score $TS_{c,m} < \eta$ for the threshold $\eta \in [0,1]$ for concept $c$ and class $m$ is defined as a proxy state. 
\end{definition}

In practice, we set $\eta$ to a relatively large number, such as 0.7, to ensure that we look at highly meaningful concepts. In DQN, in relation to Eq.~\ref{eq:q_learn}, we learn a policy that iteratively maximizes the $Q(s,a)$ value by using the update rule,
\begin{equation}
    Q^*(s, a) = \mathbb{E}_{s' \sim P(\cdot | s, a)}[\xi_t r(s, a) + \gamma \max_{a' \in A} Q_{\text{target}}(s', a')]
    \label{eq2}
\end{equation}

\subsection{Optimizing the States}
\label{sec:s_space}

At time $t$, the policy picks the seed prompt $k_t$, which is then used by the generative model, $g(k_t;w_t)$, with model weights $w$, to generate $2Z$ number of images. We randomly divide the generated images into two groups: $X_{c_1,t}=\{x_{c_1,t,i}\}_{i=1}^{Z}$ and $X_{c_2,t}=\{x_{c_2,t,i}\}_{i=1}^{Z}$. Let the TCAV scores of each group be $TS_{c_1,m,t}$ and $TS_{c_2,m,t}$. Since our objective is to find concepts that generate a higher TCAV score, concept images that have a higher score is preferred. Note that, unlike in the classical preference optimization setting with a human to rank, RLPO preference comes from the TCAV scores (e.g., $TX_{c_1,t} \succ  TX_{c_2,t}$). We call this notion RLPO-XAIF in ablation studies below. If the generative model at time $t$ is not capable of generating concepts that are in an explainable state, $\max(TS_{c_1,m,t}, TS_{c_2,m,t}) \leq \eta$, we then perform preference update on SD's weights (more details in Appendix~\ref{sec:sup_s_space}). Following Low-Rank Adaptation (LoRA)~\citep{hu2021lora}---a method that allows quick SD adaptation with a few samples,--- we only learn auxiliary weights $a$ and $b$ at each time step, and update the weights as $w_{t+1} \leftarrow w_t + \lambda ab$.

As the deep RL agent progresses over time, the states become more relevant as it approaches explainable states (Fig.~\ref{fig:rl_space}), thus the same action yields increasing rewards over time. To accommodate this, with reference to the rewards defined in Section~\ref{sec:drl}, we introduce a parameter $\xi$, which starts at 0.1 and incrementally rises up to 1 as the preference threshold, $\eta$, is approached. Different actions may result in different explainable states, reflecting various high-level concepts inherent to $f(\cdot)$. Some actions might take longer to reach an explainable state. 
As the goal is to optimize all states to achieve a common target, DQN progressively improves action selection to expedite reaching these states. Thus, deep RL becomes relevant as it optimizes over time to choose the actions that are most likely to reach an explainable state more efficiently.

\section{Experiments}
\label{sec:experiments}
To verify the effectiveness of our approach, we tested it across multiple models and several classes. We considered two CNN-based classifiers, 
GoogleNet~\citep{szegedy2015going} and InceptionV3~\citep{szegedy2016rethinking}, and two transformer-based classifiers, ViT~\citep{dosovitskiy2020image} and Swin~\citep{liu2021swin}, pre-trained on ImageNet dataset. Unless said otherwise, only GoogleNet results are shown in the main paper. All other model details and results are provided in Appendix~\ref{app:sub_models} and~\ref{sec:add_results}, respectively. 

\subsection{Ablation: Search Strategies (Why Deep RL?)}
\label{sec:ablation_1}


We chose DQN as our RL algorithm because of its ability to effectiveness traverse through discrete action space~\cite{mnih2015human} (20 unique seed prompts). We assess the effectiveness of RL by disabling the preference optimization step. As shown in Table~\ref{tab:RL_comp}, on the GoogleNet classifier, compared to $\epsilon$-greedy methods, the RL setup exhibits higher entropy, average normalized count (ANC), and inverse coefficient of variance (ICV) (See Appendix~\ref{app:def} for definitions), indicating RL's ability to efficiently explore across diverse actions. Qualitative results obtained on updating SD model with and without RL are discussed in Appendix~\ref{sec:compare_rl_presence}. 

\subsection{Ablation: Scoring Mechanisms (Why TCAV?)}
\label{sec:ablation_2}
Another important aspect of our setup is the use of TCAV score, an XAI method, to provide preference feedback and calculate rewards. Alternatively, this XAI scoring feedback can also be replaced with human feedback or LLM-based AI feedback. As an ablation study, to test the effect of human feedback, we conducted human feedback experiment with eight human subjects who provided live human feedback. Further, to evaluate the LLM-based AI feedback, we made use of GPT-4o. More details on the experiment setup and results are provided in Appendix~\ref{sec:human_llm_fd}. As shown in Table~\ref{tab:rlpo_comp}, we concluded that, even though other feedback techniques can be used, XAI-based feedback is best for generating concepts that are important to model with high speed and low computation cost. Though human and AI (GPT4o) are good at correlating semantics, by only looking at test images and concepts instead of model activations, they are not able to provide model specific explanations. 

\begin{table}[t]
\centering
\small 
\caption{Search strategy ablation. We see that RL, compared to $\epsilon$-greedy search, is the best strategy to explore the search space with high entropy, average normalized count (ANC) per action, and inverse coefficient of variance (ICV).}
\label{tab:RL_comp}
\begin{tabular}{lccc}
    \toprule
    Method & Entropy (\(\uparrow\)) & ANC (\(\uparrow\)) & ICV (\(\uparrow\)) \\
    \midrule
    RL (Ours) & \textbf{2.80} & \textbf{0.43} & \textbf{2.17} \\
    0.25 Greedy & 2.40 & 0.21 & 1.04 \\
    0.5 Greedy & 1.95 & 0.15 & 0.59 \\
    0.75 Greedy & 1.85 & 0.15 & 0.56 \\
    \bottomrule
\end{tabular}
\vspace{-16pt}
\end{table}

\begin{table}[b]
\vspace{-1.2em}
\centering
\tiny
\caption{Scoring mechanisms ablation. We see that RLPO with Explainable AI feedback (RLPO-XAIF), in this case TCAV, is a better choice than RLPO with human feedback (RLPO-HF) and AI feedback (RLPO-AIF).}
\label{tab:rlpo_comp}
\resizebox{\columnwidth}{!}{ 
\begin{tabular}{l c c c c c}
    \toprule
    Method & Class-based & Model-specific & Feedback & Execution \\
    & Explanations & Explanations & Cost* & Time (↓) \\
    \midrule
    RLPO-HF & \cmark & \xmark & NIL & 180 $\pm$ 30s \\
    RLPO-AIF & \cmark & \xmark & $>10$ GB & 72 $\pm$ 1.2s \\
    RLPO-XAIF & \cmark & \cmark & $<1$ GB & 56 $\pm$ 0.7s \\ 
    \bottomrule
\end{tabular}
}

\vspace{0.1cm} 
\footnotesize{* Feedback cost refers to the GPU memory requirements.}
\vspace{-10pt}
\end{table}
\vspace{-0.3em}
\begin{table}[ht]
\centering
\small 
\caption{Exploration Gap (EG) and Odds for different methods based on the human survey (Appendix~\ref{sec:human_survey}). This verifies that RLPO can generate concepts that human cannot think of.}
\vspace{-10pt}
\label{table:hmn}
\begin{tabular}{lcc}
    \toprule
    & \text{Laymen} & \text{Expert} \\
    & \text{(n=260)} & \text{(n=240)} \\
    \midrule
    EG (Retrieval) & 6.54\% & 10.45\% \\
    EG (Ours) & 91.54\% & 65.45\% \\
    Odds (Retrieval) & 14.29 & 8.57 \\
    Odds (Ours) & 0.09 & 0.53 \\
    \bottomrule
\end{tabular}
\vspace{-10pt}
\end{table}

\begin{table*}[ht]
\centering
\caption{Novel concepts: $TS_{c,m}$ (TCAV score), CS (Cosine similarity), ED (Euclidean distance), RCS, and RED (CS and ED with ResNet50 embedding). This indicates that RLPO can generate a diverse set of concepts that triggers the network.}
\label{tab:comparison_table}
\begin{tabular}{@{}llccccc@{}}
\toprule
Methods & Concepts & $TS_{c,m}(\uparrow)$ & CS~($\downarrow$) & ED~($\uparrow$) & RCS~($\downarrow$) & RED~($\uparrow$) \\
\midrule
EAC~\citep{sun2024explain} & C & $1.0$ & $0.76 \pm 0.03$ & $7.21 \pm 0.63$ & $0.67 \pm 0.14$ & $6.34 \pm 2.16$ \\
\midrule
\multirow{3}{*}{Lens~\citep{fel2024holistic}} & C1 & $1.0$ & $0.77 \pm 0.02$ & $7.17 \pm 0.34$ & $0.50 \pm 0.18$ & $9.70 \pm 3.20$ \\
& C2 & $1.0$ & $0.72 \pm 0.04$ & $8.02 \pm 0.87$ & $0.42 \pm 0.10$ & $10.90 \pm 2.80$ \\
& C3 & $1.0$ & $0.69 \pm 0.05$ & $8.45 \pm 0.96$ & $0.45 \pm 0.05$ & $11.03 \pm 2.17$ \\
\midrule
\multirow{3}{*}{CRAFT~\citep{fel2023craft}} & C1 & $1.0$ & $0.76 \pm 0.04$ & $7.37 \pm 0.62$ & $0.57 \pm 0.16$ & $8.80 \pm 3.20$ \\
& C2 & $1.0$ & $0.72 \pm 0.02$ & $8.25 \pm 0.39$ & $0.50 \pm 1.90$ & $9.90 \pm 3.40$ \\
& C3 & $1.0$ & $0.73 \pm 0.04$ & $7.98 \pm 0.79$ & $0.44 \pm 0.07$ & $10.80 \pm 1.90$ \\
\midrule
\multirow{3}{*}{RLPO (Ours)} & C1 & $1.0$ & $0.52 \pm 0.04$ & $10.48 \pm 0.50$ & $0.04 \pm 0.01$ & $16.80 \pm 1.40$ \\
& C2 & $1.0$ & $0.49 \pm 0.02$ & $10.65 \pm 0.20$ & $0.02 \pm 0.02$ & $17.20 \pm 0.80$ \\
& C3 & $1.0$ & $0.49 \pm 0.02$ & $10.74 \pm 0.30$ & $0.03 \pm 0.01$ & $17.60 \pm 4.40$ \\
\bottomrule
\end{tabular}
\end{table*}

\subsection{Concepts Generated by RLPO}
\label{sec:exp_3}
Objective of RLPO is to automatically generate concepts that a human or a retrieval method cannot propose but the neural network has indeed learned (i.e., gets activated). As illustrated in Fig.~\ref{fig:method_concept_comparision}, we observed that the RLPO can generate diverse set of concepts that a human would not typically think of but leads activations of the DNN to trigger. To validate this hypothesis, we conducted a survey to see if humans can think of these generated concepts as important for the DNN to understand a certain class. 

As detailed in Appendix~\ref{sec:human_survey}, during our human survey, we presented a random class image followed by two concepts, one generated by our method and another from a previous retrieval-based method~\citep{fel2024holistic, fel2023craft}. While the choices were similar in terms of XAI-score from both methods, we discovered that most participants recognize retrieval-based concepts, and only those with domain-specific knowledge could identify generated concepts as important. As highlighted in Table~\ref{table:hmn}, high Exploration Gap (EG) (definition in Appendix~\ref{app:def}) of our method indicates that most people can only identify concepts from a small subset of what $f(\cdot)$ learns during training. Intuitively, when we retrieve concepts from the test class, they tend to be similar to the test images. We further verify that the generated concepts have the following properties.

\begin{figure}[t]
    \centering
    \includegraphics[width=0.4\textwidth]{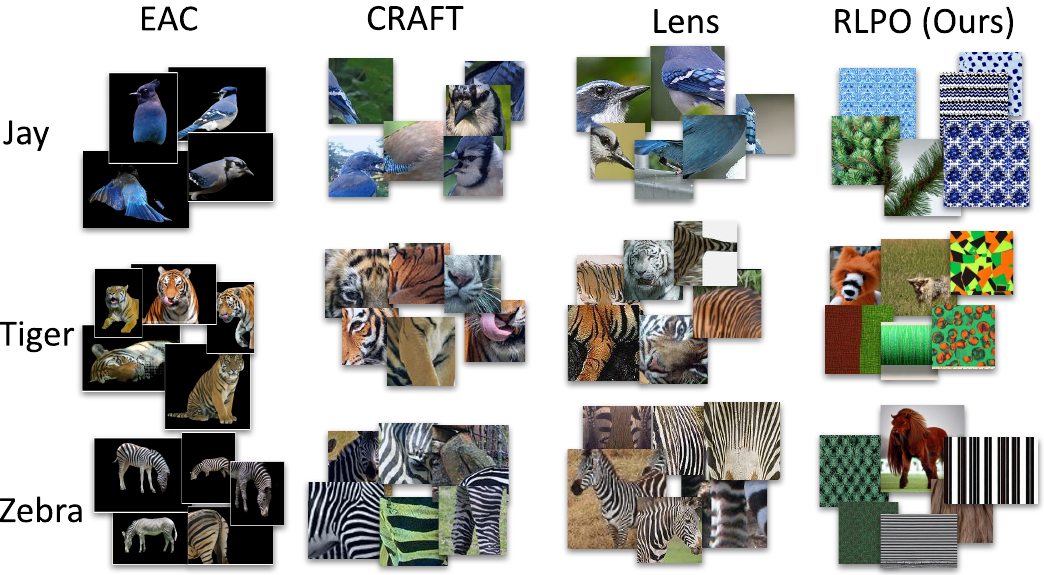}
    \caption{Samples of concepts generated by different methods. Observe that RLPO generates diverse images, beyond patches from the test images.}
    \label{fig:method_concept_comparision}
    \vspace{-20pt}
\end{figure}

\noindent {\bf Diverse representations per concept.} We verify the diversity (e.g., different types of stripes) of generated and retrieval-based concepts by computing the vector similarity of the CLIP and ResNET50 embeddings between $X_c$ and $X_m$ for different classes. As highlighted in Table~\ref{tab:comparison_table}, we observe that concepts from retrieval-based methods tend to have high cosine similarity with test images, making them less useful as abstract concepts (e.g., to explain the zebra class, a patch of zebra as a concept is less useful compared to stripes concept).

\noindent {\bf Multiple concepts per class.} 
Since RLPO algorithm explores various explainable states, we can obtain multiples concepts (e.g., stripes, savanna) with varying level of importance. Fig.~\ref{fig:concept_mapping} shows the top three class-level concepts identified by our method for the ``zebra'' class for the GoogleNet classifier. We see that, each concept set has a different TCAV score associated with them indicating their importance. Additionally, as shown in Table~\ref{tab:concept_metrics}, these concepts are inherently different from one another. As an additional result, we can see in Appendix~\ref{app:abstract_concept} how each concept evolves---representing different levels of concept abstractions---with each RL step.

\begin{table}[t]
\centering
\caption{Inter-concept Comparison for ``zebra'' class across three trial runs. Low Cosine Similarity (CS), high Wasserstein Distance (WD), and high Hotelling's T-squared Score (HTS) suggests that concepts generated from different seed prompts are different from one another.}
\label{tab:concept_metrics}
\resizebox{\columnwidth}{!}{
\begin{tabular}{lccc}
\toprule
\textbf{Metrics} & \textbf{Stripes-Running} & \textbf{Running-Mud} & \textbf{Mud-Stripes} \\
& \textbf{Concept} & \textbf{Concept} & \textbf{Concept} \\
\midrule
Avg. CS $(\downarrow)$ & $0.67 \pm 0.01$ & $0.69 \pm 0.0004$ & $0.73 \pm 0.0004$ \\
Avg. WD $(\uparrow)$ & $8.15 \pm 0.05$ & $7.85 \pm 0.02$ & $7.48 \pm 0.03$ \\
Avg. HTS $(\uparrow)$ & $7598.50 \pm 84.5$ & $13069.68 \pm 2147.81$ & $7615.73 \pm 538.06$ \\
\midrule
Are they from the& & & \\
same distribution? & No & No & No \\
\bottomrule
\end{tabular}
}
\vspace{-1.5em}
\end{table}

\begin{figure*}[t]
    \centering
    \vspace{-5pt}
    \includegraphics[width=0.85\textwidth]{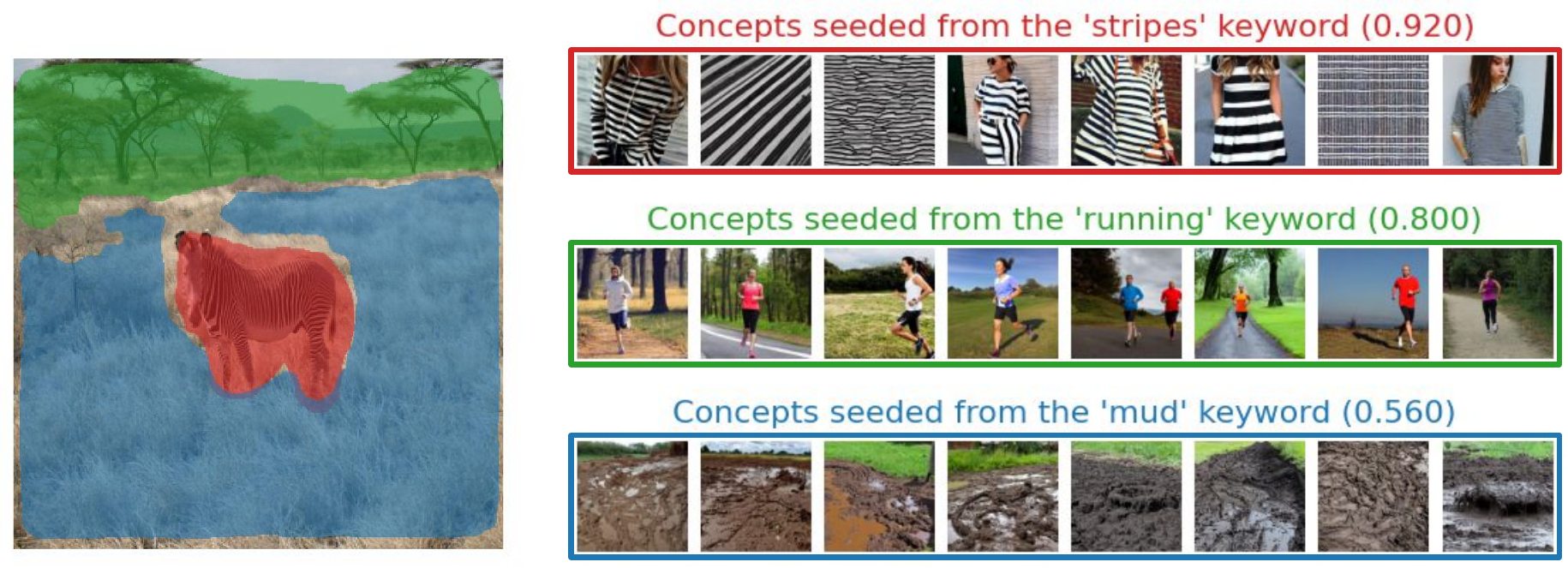}
    \vspace{-5pt}
    \caption{The figure shows the concepts generated by our method and where they are located in the input image (``zebra'' class) for GoogleNet classifier. As highlighted the ``stripes'' concept images are located near zebra, the ``running'' concept images, showing trees, highlight the background, and the ``mud'' concept highlights the grass and soil in the input image. The concepts are ordered in their importance (TCAV score) with ``stripes'' being the highest and ``mud'' being the lowest.}
    \label{fig:concept_mapping}
    \vspace{-1em}
\end{figure*}

\subsection{\bf Are Generated Concepts Correct?} 
\label{sec:exp_4}
After generating the concepts, next step is to identify what those concepts signify. To locate where in the class images generated concepts correspond, we made use of CLIPSeg~\citep{lueddecke22_cvpr}, a transformer-based segmentation model which takes in concept images as prompts, $X_c$, and highlights in a test image, $x \in X_m$, which part resembles the input prompt as a heat map. More details on this is available in Appendix~\ref{sec:conc_heat}. As shown in Fig.~\ref{fig:concept_mapping}, class image on left highlights the top 3 identified concepts by RLPO. We also compare the output generated by other popular XAI techniques such as LIME and GradCam with ones generated by RLPO, more details in Appendix~\ref{sec:compare_xai_methods}.

\begin{table}[t]
\centering
\caption{Intra-concept Comparisons for ``zebra'' class across three trial runs. High Cosine Similarity (CS), low Wasserstein Distance (WD), and low Hotelling's T-squared Score (HTS) suggests that concepts generated from same seed prompts lies on the same distribution.}
\label{tab:intra_concept_metrics}
\resizebox{\columnwidth}{!}{
\begin{tabular}{lccc}
\toprule
\textbf{Metrics} & \textbf{Stripes-Stripes} & \textbf{Running-Running} & \textbf{Mud-Mud} \\
& \textbf{Concept} & \textbf{Concept} & \textbf{Concept} \\
\midrule
Avg. CS $(\uparrow)$ & $0.99 \pm 0.0008$ & $0.99 \pm 0.0004$ & $0.99 \pm 0.0004$ \\
Avg. WD $(\downarrow)$ & $0.95 \pm 0.07$ & $0.82 \pm 0.07$ & $0.82 \pm 0.06$ \\
Avg. HTS $(\downarrow)$ & $2.46 \pm 0.091$ & $2.36 \pm 0.08$ & $2.66 \pm 0.22$ \\
\midrule
Are they from the & & & \\
same distribution? & Yes & Yes & Yes \\
\bottomrule
\end{tabular}
}
\vspace{-12pt}
\end{table}

\begin{figure}[t]
\centering
\includegraphics[width=0.45\textwidth]{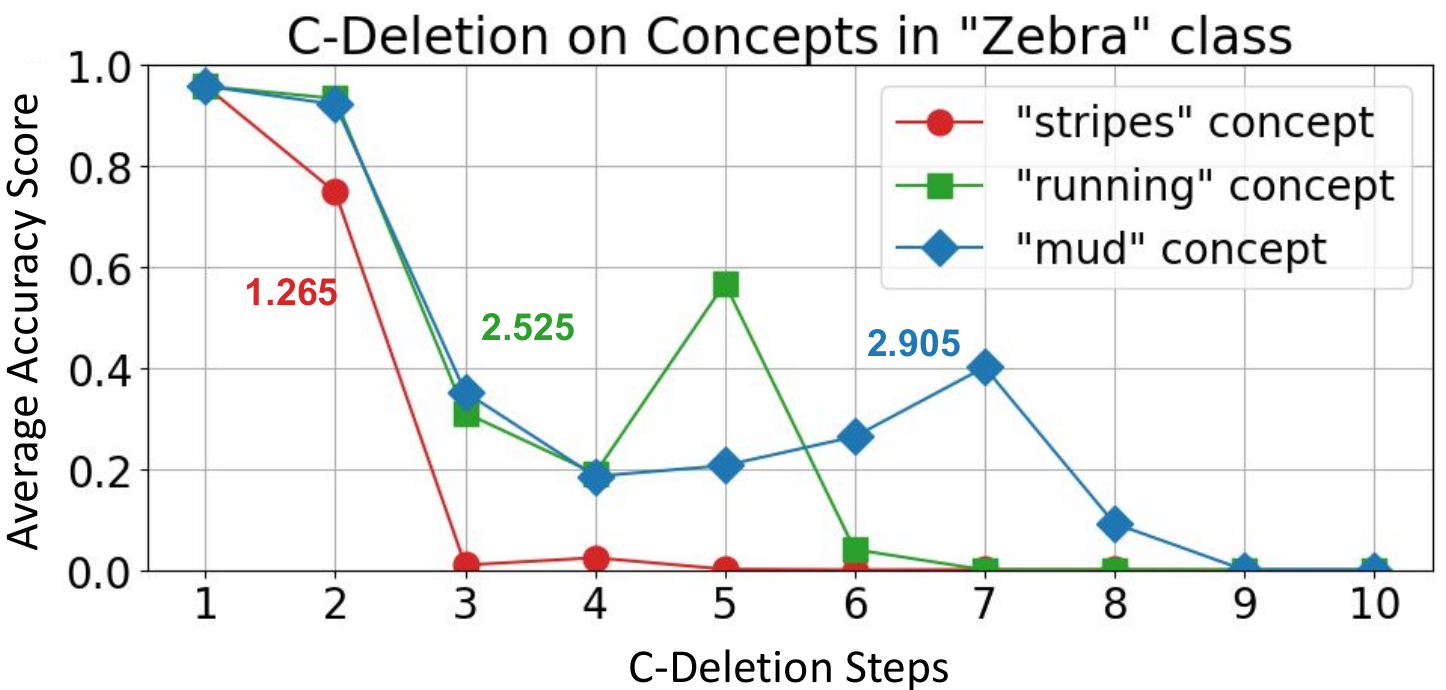}
\vspace{-6pt}
\caption{C-deletion. Removing concepts over time to measure the reliability. The colored numbers indicate the area under the curve.}
\vspace{-24pt}
\label{fig:c-deletion}
\end{figure}


After finding the relationship between generated concepts and input images, we need to validate the importance of the identified concepts. To that end, we applied c-deletion, a commonly used validation method in XAI, to the class images for each identified concept. We gradually deleted concept segments based on the heat map obtained from ClipSeg. The results for the c-deletion are shown in the Fig.~\ref{fig:c-deletion}. We see the area under curve is the highest for the most important concept ``stripes'' and the lowest for least important concept ``mud,'' indicating the order of importance of each concept. More examples on the c-deletion are in Appendix~\ref{sec:sup_cdel}. Additionally, to verify if the generated concepts are consistent, we compared their CLIP embedding across multiple runs. As shown in Table~\ref{tab:intra_concept_metrics}, we can see that the concepts generated from same seed prompt belongs to the same distribution across multiple runs.



\subsection{How Are Generated Concepts Useful to Engineers?}
To verify the usability of the generated concepts, we conducted a human study with 19 ML engineers. We first provided them the concept generated by our method for ``zebra'' class and ask them to choose relevant concepts for GoogleNet to classify a zebra without telling them that all shown images are actual concepts. All the engineers selected the ``stripes'' concept to be important while some also selected the ``mud'' concept. But most missed the ``running'' concept. This indicates that engineers cannot think of all the important concepts that gets the neural network activated. In the next step, we showed engineers the concept-explanation mapping on a random input image (similar to Fig.~\ref{fig:concept_mapping}) and asked them if the provided explanation helped them understand the model better and if it provided new insights. 94.7\% of the engineers agreed that the explanation helped in better understanding the neural network and 84.2\% agreed that it provided new insights. This result shows that the new concepts discovered by our proposed method help engineers discover new patterns that they did not imagine before (More details in Appendix~\ref{sec:human_study}).

We now demonstrate how engineers can use these newly revealed information about concepts to improve the model. As identified by RLPO, for Tiger class, the base GoogleNet model gives equal importance to both foreground (highlighted by concepts ``orange black and white'' and ``orange and black'') and background (highlighted by concepts ``blurry'') in the input (see Fig.~\ref{fig:img0} in Appendix~\ref{sec:add_results} for example explanations). As shown in Fig.~\ref{fig:concept_shift}, when we fine-tune the GoogleNet on images of Tiger-related concepts, we see that the fine-tuned model now focuses more on tiger than the background while maintaining a similar accuracy (65.6\%). Details in Appendix~\ref{sec:eff_fine}.

\begin{figure}[h]
\centering
\includegraphics[width=0.4\textwidth]{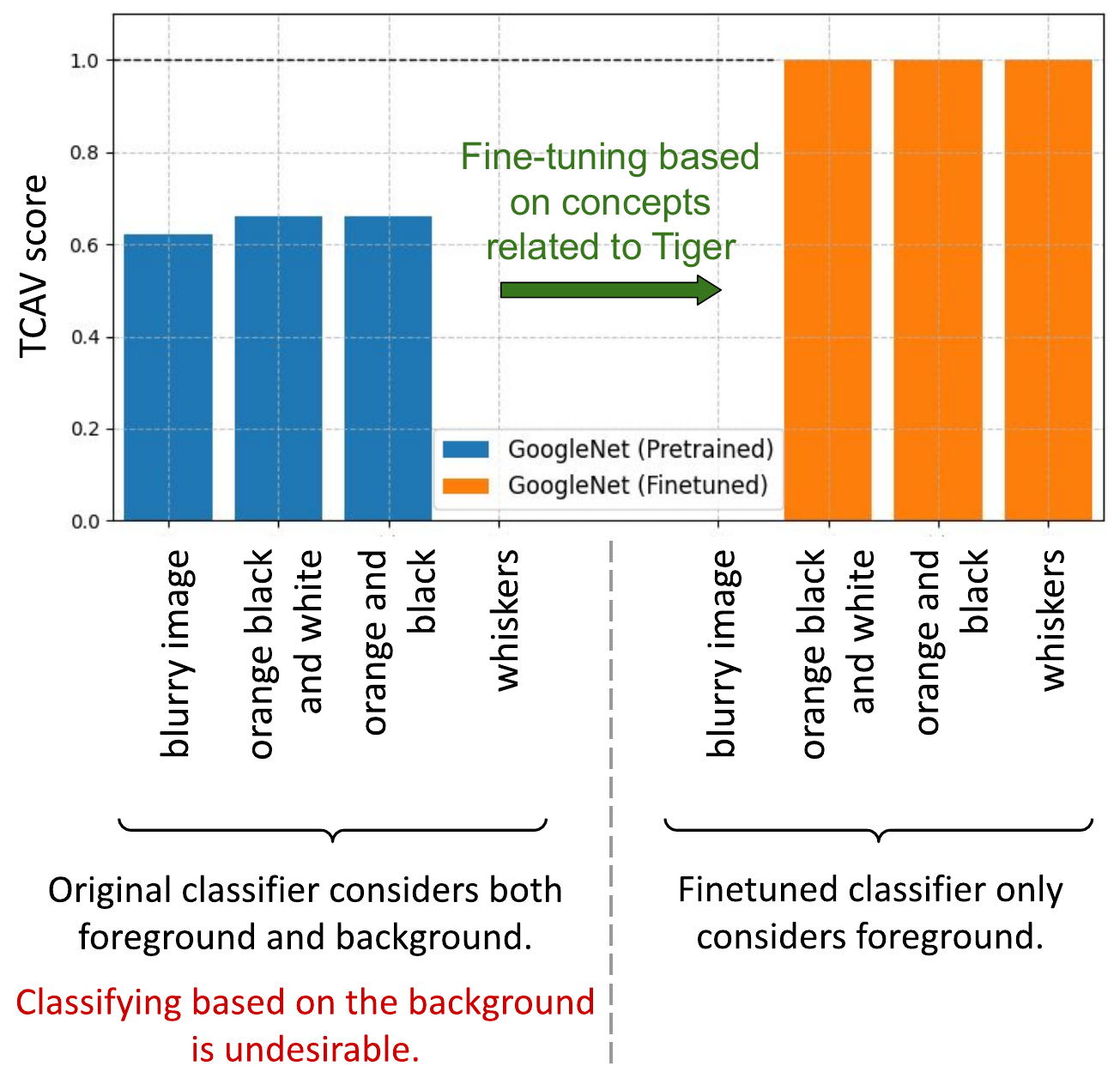}
\vspace{-0.3em}
\caption{Usefulness of RLPO. Fine-tuning GoogleNet based on generated concepts for the Tiger class.}
\label{fig:concept_shift}
\vspace{-0.5em}
\end{figure}


\subsection{Can concept Generation Help in Understanding Model Bias?}
Apart from generating actionable concepts, our proposed framework can also be used to identify spurious correlations or undesirable biases learned by the model during training. To demonstrate this, we applied RLPO to a ResNet-18 classifer trained on the CelebA dataset~\cite{liu2015deep} for ``Blonde'' versus ``Not Blonde'' classification, a task with known gender biases~\cite{de2024mitigating, subramanyam2024decider, chen2023fast}. We adapted our seed prompts to include higher-level semantic about gender, and as shown in Fig.~\ref{fig:rlpo_bias_analysis}, RLPO showed that concepts representing ``female face'' were more important for the model for the ``Blonde'' class. Additionally, we see that for the ``male face'' seed prompt, the generated concepts started generating males with long blonde hair, further narrowing down on the fine-grain concepts the model is looking at (i.e. long and blonde hair).

\begin{figure}[h]
\vspace{-0.8em}
\centering
\includegraphics[width=0.45\textwidth]{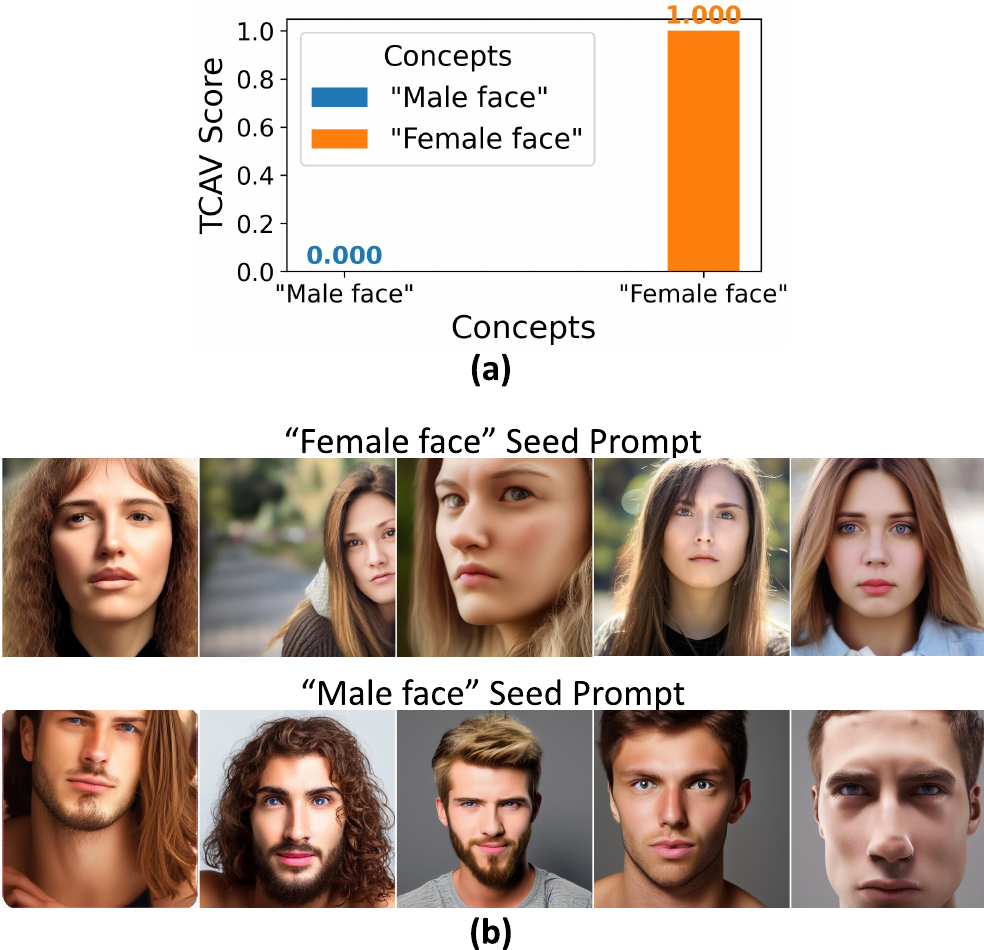}
\vspace{-0.3em}
\caption{(a) TCAV scores obtained from ``Male face'' and ``Female face'' seed prompt, highlighting the importance of female gender over male. (b) Sample generated concepts for ``Female face'' and ``Male face'' seed prompt.}
\label{fig:rlpo_bias_analysis}
\vspace{-1.2em}
\end{figure}


\subsection{Can RLPO Generalized Beyond Images?}
\label{sec:app_gen}


To demonstrate the generalizability of the proposed algorithm, we extended RLPO to generate words to explain sentiment analysis in NLP. We made use of Mistral-7B Instruct model to generate synonyms of seed prompts and optimized the language model based on preferences from TextCNN model pre-trained on IMDB sentiment dataset. Fig.~\ref{fig:sentiment-tcav} highlights relevant words with their importance score in the input. More details are provided in Appendix~\ref{sec:rlpo_sentiment}.

\begin{figure}[t]
    \centering
    \includegraphics[width=0.48\textwidth]{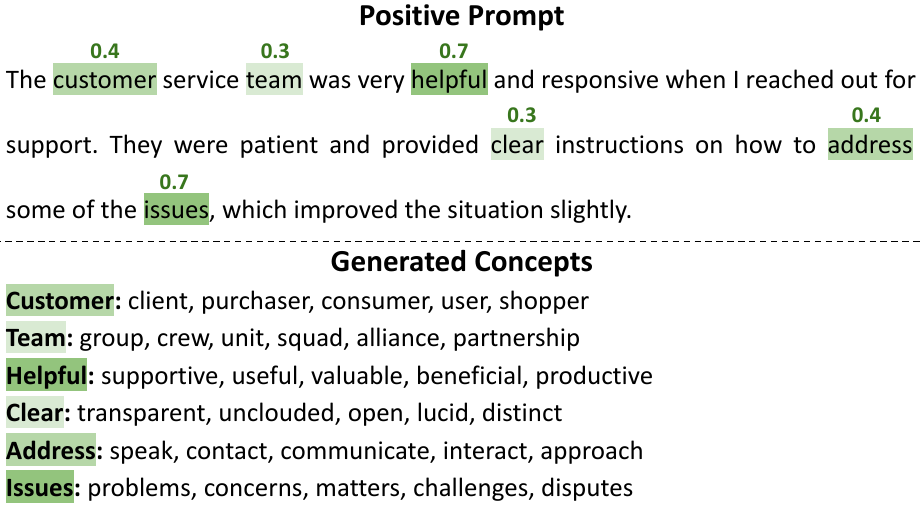}
    \caption{We use sentiment analysis in NLP to show the generalizability of RLPO. Concepts tend to be synonyms and the numbers indicate the TCAV scores.}
    \label{fig:sentiment-tcav}
    \vspace{-1.5em}
\end{figure}


\section{Limitations and Conclusions}


In this work, we introduced Reinforcement Learning-based Preference Optimization (RLPO) to automatically articulate concepts that explain the internal representations of neural networks. We demonstrated how RLPO can guide a vision-language generative model to navigate an infinitely large concept space, addressing the challenge of manually curating concept image sets.
However, our approach also suffers from some limitations. Firstly, while RLPO is designed to iteratively refine these prompts and can drift towards more model-relevant abstractions beyond the original scope, the quality and nature of the initial seeds can influence the search trajectory and efficiency. Increasing the number of seeds helps generate better concepts. Additionally, the quality and diversity of the generated concepts are also influenced by the generative model being used. The pre-training data of such models can introduce biases, and their capacity to accurately render certain concepts. While this dependency is not unique to generative approach, retrieval-based methods are similarly constrained by available data and human cognitive biases, explicitly addressing and potentially mitigating these inherited biases in the context of XAI is an important direction.

The concepts that our algorithm generates can be diverse as it tries to reveal the concepts inherent to the $f(\cdot)$, making it less domain-specific (e.g., for a medical application, there is a chance it might generate non-medical images if the $f(\cdot)$ activations get excited for non-medical data). As a future extension, we aim to input preferences from both TCAV and domain experts while optimizing, making generated explanations even more aligned to specific applications. Despite these challenges, our results demonstrate how to leverage the strengths of visual representations and adaptive learning to provide intuitive and effective solutions for understanding complex, high-level concepts in neural networks.

\section*{Impact Statement}
Our paper is a generic algorithmic contribution, and we do not foresee direct negative societal impacts. As a positive trait, our approach can potentially be used to understand and explain model bias. 




\bibliography{example_paper}
\bibliographystyle{icml2025}





\newpage
\appendix
\onecolumn
\section*{Appendix}
In this appendix, we describe the theorems, algorithm, metrics, experimental setup and additional results.



\section{Definition, Theorems, and Proves}
\label{app:def_theory_proves}

\textbf{Comparative Overlap of Human-Interpretable and Generative Model Concepts} in Neural Understanding Tasks ($f(\cdot)$). We formalize this with reference to Fig.~\ref{fig:venn}.

\begin{theorem}
\label{thm:rationale1}
Let the set of human-interpretable concepts that the $f(\cdot)$ has learned be $\mathcal{C}_N$, and the concept sets human collected, retrieved though segmentation, and generated using a generative model be $\mathcal{C}_H$, $\mathcal{C}_R$, and $\mathcal{C}_G$, respectively. Then, $| \mathcal{C}_G \cap \mathcal{C}_N | \geq | \mathcal{C}_H \cap \mathcal{C}_N| \geq 0 $ and $| \mathcal{C}_G \cap \mathcal{C}_N | \geq | \mathcal{C}_R \cap \mathcal{C}_N |  \geq 0$.
\end{theorem}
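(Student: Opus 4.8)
The plan is to establish the two chained inequalities by an inclusion argument on the underlying concept sets, treating "human-interpretable concepts the network has learned" as the fixed target $\mathcal{C}_N$ and showing that the generative family $\mathcal{C}_G$ subsumes both the handcrafted family $\mathcal{C}_H$ and the retrieval family $\mathcal{C}_R$. The key structural observation is that $\mathcal{C}_H$, $\mathcal{C}_R$, and $\mathcal{C}_G$ are not arbitrary sets: each handcrafted concept is collected as a finite set of natural images, and each retrieved concept is a segment (or a factor-analytic combination of segments) of the test images, which is itself an image. A sufficiently expressive text-to-image generative model $g(\cdot)$ can, in principle, produce any natural image up to arbitrarily small error, so both the handcrafted image sets and the retrieved segments lie (up to closure / $\varepsilon$-approximation in image space) in the reachable set of $g$. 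Hence $\mathcal{C}_H \subseteq \mathcal{C}_G$ and $\mathcal{C}_R \subseteq \mathcal{C}_G$ at the level of representable concepts.

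First I would make the modeling assumption precise: fix a metric on image space and declare two concepts "equal" if their representative image sets are within $\varepsilon$ in (say) the induced Hausdorff or embedding distance used to define the CAV; then state as a lemma that the image of $g$ is $\varepsilon$-dense in the manifold of natural images (this is the expressivity assumption on Stable Diffusion, and the one place the argument is genuinely an idealization rather than a theorem). Second, from $\mathcal{C}_H \subseteq \mathcal{C}_G$ and $\mathcal{C}_R \subseteq \mathcal{C}_G$, intersect both sides with the fixed set $\mathcal{C}_N$: monotonicity of intersection under inclusion gives $\mathcal{C}_H \cap \mathcal{C}_N \subseteq \mathcal{C}_G \cap \mathcal{C}_N$ and likewise for $\mathcal{C}_R$. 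Third, apply monotonicity of cardinality under inclusion to obtain $|\mathcal{C}_G \cap \mathcal{C}_N| \geq |\mathcal{C}_H \cap \mathcal{C}_N|$ and $|\mathcal{C}_G \cap \mathcal{C}_N| \geq |\mathcal{C}_R \cap \mathcal{C}_N|$; non-negativity of cardinality closes off the "$\geq 0$" tail of each chain. (If the sets are infinite, replace cardinality by the appropriate measure or by restriction to a common finite budget $B$ of concepts, so that the inequalities compare counts of distinct concepts realizable within that budget.)

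The main obstacle — and the step I would spend the most care on — is the expressivity lemma: asserting that $\mathcal{C}_H, \mathcal{C}_R \subseteq \mathcal{C}_G$ requires that the generative model can realize (approximately) every handcrafted image and every test-image segment. This is false for a pathologically limited generator, so the theorem is implicitly conditioned on a "universal approximation"-style hypothesis on $g$; I would state this hypothesis explicitly and note that it is the standard idealization for diffusion models trained on web-scale data. A secondary subtlety is that $\mathcal{C}_R$ is built from segments of the \emph{test} images, so it is the \emph{smallest} of the three families in practice — but the theorem only claims $\mathcal{C}_G$ dominates it, which follows a fortiori once segments are viewed as images in the range of $g$. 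I would also remark that the inequalities are about \emph{reachable} concept sets, not about what RLPO actually finds in finite time; the role of the RL procedure (Sections~\ref{sec:drl}--\ref{sec:s_space}) is precisely to make the intersection $\mathcal{C}_G \cap \mathcal{C}_N$ attainable efficiently, which is an orthogonal, algorithmic claim.
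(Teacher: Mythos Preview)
Your proposal is correct and follows essentially the same route as the paper: the paper simply \emph{posits} $\mathcal{C}_H \subseteq \mathcal{C}_G$ and $\mathcal{C}_R \subseteq \mathcal{C}_G$ as a definition (their Definition~\ref{def:hrc}), then applies monotonicity of intersection with $\mathcal{C}_N$ and of cardinality, together with non-negativity of cardinality, exactly as you outline. Your additional discussion of an expressivity lemma, $\varepsilon$-density, and the finite-budget caveat is more careful than the paper's treatment (which takes the inclusions as axiomatic rather than arguing for them), but the logical skeleton is identical.
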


\begin{proof}[Proof sketch]
$\mathcal{C}_H \subseteq \mathcal{C}_G \enspace \text{and} \enspace \mathcal{C}_R \subseteq \mathcal{C}_G \implies 
| \mathcal{C}_H \cap \mathcal{C}_N | \geq 0 \enspace \text{and} \enspace | \mathcal{C}_R \cap \mathcal{C}_N | \geq 0 $
\end{proof}

\begin{proof}[Proof of Theorem~\ref{thm:rationale1}]

\begin{definition}
\label{def:hrc}
Let \( \mathcal{C}_H \), \( \mathcal{C}_R \), and \( \mathcal{C}_G \) denote the sets representing human-interpretable concepts, retrieved concepts, and concepts generated by a generative model, respectively. We define the relationships between these sets as follows:
\[
\mathcal{C}_H \subseteq \mathcal{C}_G \quad \text{and} \quad \mathcal{C}_R \subseteq \mathcal{C}_G.
\]
\end{definition}

\begin{property}
\label{prop:hrc}
For any set \( \mathcal{C}_i \), where \( i \in \{H, R, G\} \), it holds that:
\[
\emptyset \subseteq (\mathcal{C}_i \cap \mathcal{C}_N) \subseteq (\mathcal{C}_i \cup \mathcal{C}_N),
\]
where \( \mathcal{C}_N \) represents the set of concepts learned by $f(\cdot)$.
\end{property}

For any two sets \(\mathcal{A}\) and \(\mathcal{B}\), the size of their intersection \( |\mathcal{A} \cap \mathcal{B}| \) is non-negative since it represents the number of elements common to both sets. Thus, we have:
\begin{equation}
\label{eq:thm1_0}
| \mathcal{C}_H \cap \mathcal{C}_N | \geq 0 \quad \text{and} \quad | \mathcal{C}_R \cap \mathcal{C}_N | \geq 0.
\end{equation}

Given Definition~\ref{def:hrc} and Property~\ref{prop:hrc}, we assume the following subset relationships between the sets:
\[
| \mathcal{C}_H \subseteq \mathcal{C}_G | \quad \text{and} \quad | \mathcal{C}_R \subseteq \mathcal{C}_G |.
\]

Case 1. Since \(\mathcal{C}_H \subseteq \mathcal{C}_G\), any element \(x \in \mathcal{C}_H\) is also in \(\mathcal{C}_G\). Therefore, any element \(x \in \mathcal{C}_H \cap \mathcal{C}_N\) is also in \(\mathcal{C}_G \cap \mathcal{C}_N\). Hence,
\[
\mathcal{C}_H \cap \mathcal{C}_N \subseteq \mathcal{C}_G \cap \mathcal{C}_N.
\]

Case 2. Similarly, since \(\mathcal{C}_R \subseteq \mathcal{C}_G\), any element \(x \in \mathcal{C}_R\) is also in \(\mathcal{C}_G\). Therefore, any element \(x \in \mathcal{C}_R \cap \mathcal{C}_N\) is also in \(\mathcal{C}_G \cap \mathcal{C}_N\). Hence,
\[
\mathcal{C}_R \cap \mathcal{C}_N \subseteq \mathcal{C}_G \cap \mathcal{C}_N.
\]

From Case 1 and Case 2, since \( \mathcal{C}_H \cap \mathcal{C}_N \) and \( \mathcal{C}_R \cap \mathcal{C}_N \) are subsets of \( \mathcal{C}_G \cap \mathcal{C}_N \), it follows that:

\begin{equation}
\label{eq:thm1_1}
| \mathcal{C}_H \cap \mathcal{C}_N | \leq | \mathcal{C}_G \cap \mathcal{C}_N |
\end{equation} and
\begin{equation}
\label{eq:thm1_2}
| \mathcal{C}_R \cap \mathcal{C}_N | \leq | \mathcal{C}_G \cap \mathcal{C}_N |
\end{equation}

Combining Eqs.~\ref{eq:thm1_1} and~\ref{eq:thm1_2} with the non-negativity established in Eq~\ref{eq:thm1_0}, we have:
\begin{equation}
    | \mathcal{C}_G \cap \mathcal{C}_N | \geq | \mathcal{C}_H \cap \mathcal{C}_N | \geq 0
\end{equation}
and
\begin{equation}
    | \mathcal{C}_G \cap \mathcal{C}_N | \geq | \mathcal{C}_R \cap \mathcal{C}_N | \geq 0.
\end{equation}
\end{proof}

\textbf{The Effects of DQN-PO based Concept Space Traversal}. We now formalize what concepts the DQN has learned, with reference to Fig.~\ref{fig:rl_space}. 

\begin{theorem}
\label{thm:ex_move} When traversing in the concept space, with each reinforcement learning step, 
\begin{enumerate}
    \item Case 1: Moving from a proxy state towards an explainable state monotonically increases the reward.
    \item Case 2: Moving from an explainable state towards the target class does not increase the reward.
\end{enumerate}
\end{theorem}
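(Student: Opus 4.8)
The plan is to unpack the definition of the adjusted reward $r_t$ given in Section~\ref{sec:drl} and track how its two ingredients---the TCAV score $TS_{c,m}$ at the current state and the monotone scaling factor $\xi_{t,k}$---behave under the two kinds of moves. Recall that the reward is proportional to $TS_{c,m}$ evaluated at state $s_t$ under action $a_t$, scaled by $\xi_{t,k}$, and that $\xi_{t+1,k} \leftarrow \min\bigl(1, \tfrac{\xi_{t,k}+1}{T}\bigr)$ is nondecreasing in $t$ and bounded by $1$. So the whole argument reduces to controlling the monotonicity of $TS_{c,m}$ along each trajectory and combining it with the monotonicity of $\xi$.

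For Case~1, I would argue as follows. By Definition~1 and Definition~2, a proxy state has $TS_{c,m} < \eta$ and an explainable state has $TS_{c,m} \geq \eta$; moving "towards" an explainable state means the generative model $g(\cdot;w_t)$ is being LoRA-updated via preference optimization precisely on the steps where $\max(TS_{c_1,m,t},TS_{c_2,m,t}) \leq \eta$, and the preference update $w_{t+1} \leftarrow w_t + \lambda ab$ is chosen to push generated concepts toward the higher-scoring group. I would invoke the preference-optimization guarantee (the DPO/Diffusion-DPO mechanism cited in the Preliminaries, adapted here with TCAV as the preference signal) to assert that this update does not decrease the expected TCAV score of the preferred direction, so $TS_{c,m}$ is nondecreasing step to step along the proxy-to-explainable segment. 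Since $\xi_{t,k}$ is also nondecreasing, the product $\xi_{t,k}\cdot TS_{c,m}(s_t,a_t)$, i.e. the reward, is nondecreasing; strict increase follows whenever either factor strictly increases, which happens generically before saturation. Writing $R(\pi)=\mathbb{E}\bigl[\sum_t \xi_t r_t\bigr]$ as in Section~\ref{sec:drl} then gives the monotone increase of cumulative reward.

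For Case~2, the key observation is a saturation/ceiling argument. Once a state is explainable we have $TS_{c,m}\geq\eta$, and since $TS_{c,m}\in[0,1]$ is already near its cap (and in the regime the paper targets, $\eta$ large, e.g. $0.7$), moving further "towards the target class" does not raise the TCAV score in a way that produces additional reward---indeed the update rule is only triggered while $\max(TS_{c_1,m,t},TS_{c_2,m,t})\leq\eta$, so once the explainable threshold is crossed the preference update ceases and there is no mechanism driving the score higher; simultaneously $\xi_{t,k}$ has reached (or is clamped at) its ceiling $1$ by construction of the $\min(1,\cdot)$ recursion. With both factors pinned, the reward stops increasing, which is exactly the claim. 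I would phrase this as: for $t$ past the first explainable step, $\xi_{t,k}=1$ and $TS_{c,m}(s_t,a_t)$ no longer increases, hence $r_t$ does not increase.

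The main obstacle is Case~1's monotonicity claim for $TS_{c,m}$ under the LoRA preference update: preference optimization guarantees improvement of a preference-likelihood objective in expectation, not pointwise monotonicity of the downstream TCAV score at every single step. I would bridge this either by stating the claim in expectation (consistent with the $\mathbb{E}[\cdot]$ in $R(\pi)$ and the "monotonically increasing" phrasing being about expected adjusted reward), or by adding the mild assumption that the preference signal is aligned with the TCAV score so that increasing preference likelihood for the higher-TCAV group does not decrease expected $TS_{c,m}$. The rest---monotonicity and boundedness of $\xi_{t,k}$, the threshold dichotomy from Definitions 1--2, and the ceiling argument in Case~2---is routine given the definitions already in the paper.
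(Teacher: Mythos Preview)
Your proposal is essentially the same as the paper's: both factor the reward as (scaling factor) $\times$ (TCAV score), argue that each factor is nondecreasing in Case~1, and that each is pinned in Case~2. The differences are cosmetic. The paper writes the scaling as a piecewise function $f(t)=\xi\cdot t$ for $t\le t_\eta$ and a constant $\xi_0$ thereafter, then computes $R(t+1,a)-R(t,a)$ explicitly rather than appealing to the product-of-nondecreasing-nonnegative-functions principle you use; for Case~2 it argues $S_{t+1}-S_t\approx 0$ on the grounds that the generator is already fine-tuned for the concept, not via your observation that the preference update ceases once the threshold is crossed. Notably, the monotonicity of $S_t$ in Case~1---which you carefully attribute to the preference-optimization mechanism and flag as the main obstacle---is simply \emph{asserted} in the paper's proof without further justification, so your treatment is if anything more scrupulous on exactly that point.
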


\begin{proof}[Proof sketch]
Obtain the rewards before and after $\eta$ and compute the difference in reward for each segment.
\end{proof}

\begin{property}
As \( \xi \) increases, the reward function proportionally amplifies, particularly enhancing the significance of outcomes near \( t_{\eta} \), which marks the point beyond which TCAV scores are always 1.\end{property}

\begin{proof}[Proof of Theorem 2]

WLOG, let the TCAV score, $S_{c,m,t}$, for concept $c$ and class $m$ at time $t$ be $S_t$. The reward function is defined as (Section.~\ref{sec:drl}),
\begin{equation}
R(t,a) = K \cdot S_t \cdot f(t),
\end{equation}
for a constant $K$ and a factor,
\begin{equation}
f(t) = 
\begin{cases} 
\xi \cdot t & \text{if } t \leq t_{\eta}, \\
\xi_0 & \text{otherwise},
\end{cases}
\end{equation}
for positive parameters $\xi$ and $\xi_0$.\\

Case 1: Considering the difference in reward function at time \( t \) when \( t \leq t_{\eta} \),

\begin{align}
R(t+1,a) - R(t,a) & = K \cdot S_{t+1} \cdot f(t+1) - K \cdot S_{t} \cdot f(t) \\
& = K \cdot S_{t+1} \cdot \xi \cdot (t+1) - K \cdot S_t \cdot \xi \cdot t \nonumber\\
& = K \cdot \xi \cdot( S_{t+1} \cdot  (t+1) -  S_t \cdot   t) \nonumber\\
& = K \cdot \xi \cdot( t(S_{t+1}-S_{t}) + S_{t+1}) \nonumber\\
 \text{From } (S_{t+1}-S_{t}) = \frac{h(t+1) - h(t)}{(t+1) - t})  = h'(t), \\
R(t+1,a) - R(t,a) & = K \cdot \xi \cdot( t \cdot h^\prime(t) + S_{t+1}).
\end{align}

Since,
\begin{enumerate}
    \item $S_t$ is monotonically increasing for $t \leq t_\eta \implies h'(t) > 0$ and
    \item $S_t \in [0,1]$, 
\end{enumerate}
\begin{equation}
    R(t+1,a) - R(t,a) \geq 0.
\end{equation}

Case 2: Considering the same difference in rewards for \( t \geq t_{\eta} \).
\begin{align}
R(t+1,a) - R(t,a) & = K \cdot S_{t+1} \cdot f(t+1) - K \cdot S_{t} \cdot f(t) \\
& = K \cdot S_{t+1} \cdot \xi - K \cdot S_{t} \cdot \xi_0 \nonumber\\
& = K \cdot \xi_0 \cdot( S_{t+1}-S_{t})\nonumber
\end{align}

Given that \( S_{t} \) and \( S_{t_+1} \) are both outcomes generated from a generative model fine-tuned for a particular concept, \( S_{t+1} - S_{t} \approx 0 \) in response to the same action. Hence,
\begin{equation}
    R(t+1,a) - R(t,a) \approx 0.
\end{equation}

\end{proof}

Theorem~\ref{thm:ex_move} characterizes the how TCAV scores (i.e., proportional to rewards) are increased up to $\eta$. As a result, as shown in Theorem~\ref{thm:gap2}, if the generator moves close to the image class, then the explainer generates images similar to the class. Therefore, by varying $\eta$ we can generate concepts with different levels of abstractions.

\begin{theorem}
\label{thm:gap2}
As we go closer to the concept class, $| \mathcal{C}_G \cap \mathcal{C}_N |$ becomes larger for generated concepts $\mathcal{C}_G$ and $f(\cdot)$'s internal concepts, $\mathcal{C}_N$.
\end{theorem}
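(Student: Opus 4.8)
The plan is to make precise what ``going closer to the concept class'' means, then exhibit a monotone family of generated concept sets whose overlap with $\mathcal{C}_N$ is non-decreasing, with the class images themselves as the limiting (maximal-overlap) case. First I would introduce a closeness parameter. Recall from Theorem~\ref{thm:ex_move} that once the generator crosses the threshold $t_\eta$ its outputs sit in explainable states and the TCAV score saturates at $1$; continuing to fine-tune with the preference signal (Case 2) pushes the generator's output distribution $g_t(\cdot\,;w_t)$ toward the distribution $p_m$ of the test class $X_m$ without further changing the reward. I would index the resulting concept sets by $t$ (equivalently, by a divergence $\delta_t = d(g_t, p_m)$ that decreases as we ``go closer''), writing $\mathcal{C}_G(t)$ for the set of human-interpretable concepts realized in the support of $g_t$.

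Second, I would state the one semantic assumption the argument needs: a concept $c$ belongs to $\mathcal{C}_N$ (for the class $m$ under explanation) only if it is actually instantiated in class-$m$ images, i.e.\ $X_m$ realizes $c$ with $TS_{c,m}\ge\eta$. This is just the statement that the network's learned, class-relevant concepts are visible in the class's own images --- the same premise that makes retrieval methods work at all, and which underlies $\mathcal{C}_R\subseteq\mathcal{C}_G$ in Definition~\ref{def:hrc}. Under this assumption the limiting set $\mathcal{C}_G(\infty)$, consisting of generated images indistinguishable from $X_m$, realizes every concept in $\mathcal{C}_N$ relevant to $m$, so $\mathcal{C}_G(\infty)\cap\mathcal{C}_N$ equals that (maximal) set.

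Third, I would establish the monotonicity itself: as $\delta_t\downarrow 0$, each concept $c\in\mathcal{C}_N$ present in class images becomes, and then remains, present in $g_t$'s outputs, hence enters $\mathcal{C}_G(t)\cap\mathcal{C}_N$; since concentrating $g_t$ on class-like images does not remove already-covered concepts, $t\mapsto|\mathcal{C}_G(t)\cap\mathcal{C}_N|$ is non-decreasing and converges to $|\mathcal{C}_G(\infty)\cap\mathcal{C}_N|$. Combining with Theorem~\ref{thm:rationale1} yields $|\mathcal{C}_H\cap\mathcal{C}_N|,\,|\mathcal{C}_R\cap\mathcal{C}_N|\le|\mathcal{C}_G(t)\cap\mathcal{C}_N|\uparrow$, which is the asserted claim.

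The main obstacle is the second step: formalizing ``the network's internal concepts are exactly those instantiated in the class images'' without circularity, since $\mathcal{C}_N$ is itself defined through TCAV applied to concept sets. I would handle this by taking it as a stated modeling assumption, consistent with how TCAV and every retrieval baseline are set up, and by phrasing the monotonicity of Step~3 under an idealized ``no-forgetting'' fine-tuning dynamics, remarking that in practice the LoRA updates only add coverage of class-relevant features rather than destroy previously covered ones. A secondary subtlety worth flagging is interpretive rather than mathematical: a growing $|\mathcal{C}_G(t)\cap\mathcal{C}_N|$ is a ``good news / bad news'' phenomenon --- it is precisely why $\eta$ must be kept moderate to retain the abstractness of the concepts --- but this does not affect the proof.
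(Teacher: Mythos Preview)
Your proposal is broadly sound for the level of rigor this theorem admits, but it takes a different route from the paper. The paper's proof works at the \emph{algorithmic} level: it models the preference-optimization update explicitly as a sample-swap rule, $\mathcal{C}_G(t+1)=(\mathcal{C}_G(t)\cup s_{\text{opt}}(t))\setminus s_{\text{min}}(t)$, where $s_{\text{opt}}$ and $s_{\text{min}}$ are the higher- and lower-sensitivity samples among two draws from $g(\cdot)$ under a sensitivity score $\sigma$, and argues that this monotone selection steers $\mathcal{C}_G$ toward $\mathcal{C}_N$ because $\mathcal{C}_N$ is itself assumed ``close to the target concept class.'' You instead work at the \emph{distributional} level: you parameterize closeness by a divergence $\delta_t=d(g_t,p_m)$, state an explicit modeling assumption that the concepts in $\mathcal{C}_N$ are instantiated in class images, and invoke a no-forgetting assumption on the LoRA updates to get monotone growth of the intersection. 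The paper's version ties the theorem more directly to the actual DPO step of Section~\ref{sec:s_space}, so it better explains \emph{why} the algorithm produces the claimed convergence; your version is more transparent about which assumptions are doing the work (what $\mathcal{C}_N$ means operationally, what ``no forgetting'' requires), and your remark on the good-news/bad-news role of $\eta$ is a useful addendum the paper does not make. Neither argument is fully rigorous; yours would be strengthened by linking the distributional drift $\delta_t\downarrow 0$ back to the sensitivity-based sample selection that actually drives it, rather than taking that drift as a black box consequence of Theorem~\ref{thm:ex_move}.
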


\begin{proof}[Proof sketch]
Measure the sensitivity difference between $S_{c_1,m,t}$ and $S_{c_2,m,t}$ as $t \to \infty$.
\end{proof}

\begin{proof}[Proof of Theorem 3]
At each time step \( t \), two sets of samples are generated near \( \mathcal{C}_G(t) \) using a generative function \( g(.) \), denoted by \( s_1(t) = g(\mathcal{C}_G(t)) \) and \( s_2(t) = g(\mathcal{C}_G(t)) \). We define the sensitivity of these samples to the concept class using a measurable attribute, \( \sigma(s) \), that quantifies the alignment or closeness of a sample \( s \) to the target concept class.

The optimization step at each time step selects the sample with higher sensitivity, denoted by:
\[
s_{\text{opt}}(t) = \arg\max \{ \sigma(s_1(t)), \sigma(s_2(t)) \}
\]
The sample with the lower sensitivity is given by 
\[
s_{\text{min}}(t) = \arg\min \{ \sigma(s_1(t)), \sigma(s_2(t)) \}
\]
Sample \( s_{\text{opt}}(t) \) and \( s_{\text{min}}(t) \) is then used to adjust \( \mathcal{C}_G \), increasing its overall sensitivity to the concept class. Consequently, the sequence of \( \mathcal{C}_G \) over time evolves as:
\[
\mathcal{C}_G(t+1) = (\mathcal{C}_G(t) \cup \ s_{\text{opt}}(t) ) \setminus \ s_{\text{min}}(t)
\]

This process incrementally increases the sensitivity of \( \mathcal{C}_G(t+1) \) to the concept class, driven by the iterative inclusion of optimized samples.

Given that \( \mathcal{C}_N \) is already close to the target concept class, the movement of \( \mathcal{C}_G \) through this optimization process indirectly steers \( \mathcal{C}_G \) towards \( \mathcal{C}_N \). As \( \mathcal{C}_G \) evolves in this manner, the overlap between \( \mathcal{C}_G \) and \( \mathcal{C}_N \) naturally increases, leading to:
\[
\lim_{t \to \infty} |\mathcal{C}_G(t) \cap \mathcal{C}_N| \implies \lim_{t \to \infty} \mathcal{C}_G(t) = \mathcal{C}_N.
\]
This results from \( \mathcal{C}_G(t) \) containing more elements that exhibit higher sensitivity similar to those in \( \mathcal{C}_N \), thereby increasing their intersection.
\end{proof}

\subsection{Definitions}
\label{app:def}
\textbf{Entropy}: Entropy quantifies the uncertainty or randomness inherent in a probability distribution. For a discrete random variable \( X \) with possible outcomes \( x_1, x_2, \dots, x_n \) and corresponding probabilities \( P(X = x_i) = p_i \), the entropy \( H(X) \) is defined as: \( H(X) = - \sum_{i=1}^{n} p_i \log p_i \), where \( p_i \) represents the probability of outcome \( x_i \).

\textbf{Odds}: Odds describe how many times an event is expected to happen compared to how many times it is not. They are often used in gambling, sports betting, and statistics. The odds of an event with probability p (where
p is the probability of the event happening) are calculated as: $\frac{p}{1 - p}$.

\textbf{Exploration Gap (EG)}: quantifies the proportion of missed optimal actions, defined as $1 - \text{Accuracy}$, highlighting how humans frequently miss the most optimal actions when presented with generated concepts.

\textbf{Average Normalized Count (ANC)}: The ANC is a measure of the central tendency of the normalized action frequencies within a distribution. It provides insight into how the actions are distributed relative to the overall frequency distribution. A high ANC indicates that, on average, the action frequencies are relatively large, meaning that certain actions are more dominant. Conversely, a low ANC suggests that the actions are low and only a few high frequent actions are present. Given by $\frac{1}{n \cdot \max (f)} \sum_{i=1}^{n} f_i$, where \( f_i \) is the frequency of action \( i \).

\textbf{Inverse Coefficient of Variation (ICV)}: A standardized measure of concentration, calculated as the ratio of the mean to the standard deviation: $\frac{\mu}{\sigma}$. It represents how many standard deviations fit into the mean.

\textbf{Feedback Cost}: Feedback Cost refers to the resource(GPU) expense associated with obtaining feedback during the training of the model.

\textbf{Execution Time}: Execution time refers to the total time taken by a model or algorithm to complete its task from start to finish. This includes the time for data processing, model computation, and generating outputs.

\section{Methodology}

\subsection{Machine Learning Models We Use}
\label{app:sub_models}

\textbf{Neural Network Under Test ($f(\cdot)$)}: We test RLPO for all the different classification models given below.
\begin{enumerate}
    \item GoogleNet: We utilized a pretrained model from PyTorch torchvision pretrained models with weights initialized from GoogLeNet\_Weights.IMAGENET1K\_V1.
    \item InceptionV3: We utilized a pretrained model from PyTorch torchvision pretrained models with weights initialized from Inception\_V3\_Weights.IMAGENET1K\_V1.
    \item Vision Transformer (ViT): We utilized a pretrained model from PyTorch torchvision pretrained models with weights initialized from ViT\_B\_16\_Weights.IMAGENET1K\_V1.
    \item Swin Transformer: We utilized a pretrained model from PyTorch torchvision pretrained models with weights initialized from Swin\_V2\_B\_Weights.IMAGENET1K\_V1.
    \item TextCNN sentiment classification model: We utilized a pretrained model from Captum library. The model was trained on IMBD sentiment dataset. 
\end{enumerate}

\textbf{TCAV Logistic Model}: We utilized a logistic regression model to address classification tasks in TCAV instead of the default SGD (Stochastic Gradient Descent) classifier. This decision was based on our observation that the SGD classifier produced high variance TCAV (Testing with Concept Activation Vectors) scores, which indicated inconsistent model behavior across different runs. As demonstrated in Table~\ref{tab:tcav_scores_no_custom}, the standard deviations for SGD-derived scores are frequently larger than those obtained with the logistic model, underscoring the latter's improved stability. We configured the model to perform a maximum of 1000 iterations (max\_iter=1000).

\begin{table*}[htbp] 
    \centering
    \caption{TCAV Scores (Concept/Random) for Different Models, Layers, and Classifiers. Scores are presented as mean $\pm$ standard deviation across 5 runs with random seed. Concepts used in these experiments are from the ones provided by the authors of TCAV paper.}
    \label{tab:tcav_scores_no_custom}
    \resizebox{\textwidth}{!}{
    \begin{tabular}{@{}llcccc@{}}
        \toprule
        \multirow{2}{*}{Model} & \multirow{2}{*}{Layer} & \multicolumn{2}{c}{Stripes/Random TCAV Score} & \multicolumn{2}{c}{Dots/Random TCAV Score} \\
        \cmidrule(lr){3-4} \cmidrule(lr){5-6}
        & & SGD & Logistic & SGD & Logistic \\
        \midrule
        \multirow{2}{*}{GoogleNet} & inception3a & $0.662 \pm 0.03$ / $0.338 \pm 0.03$ & $0.67 \pm 0.00$ / $0.33 \pm 0.00$ & $0.36 \pm 0.05$ / $0.64 \pm 0.05$ & $0.33 \pm 0.00$ / $0.67 \pm 0.00$ \\
        & inception4e & $0.992 \pm 0.01$ / $0.008 \pm 0.01$ & $1.00 \pm 0.00$ / $0.00 \pm 0.00$ & $0.01 \pm 0.007$ / $0.99 \pm 0.007$ & $0.00 \pm 0.00$ / $1.00 \pm 0.00$ \\
        \midrule
        \multirow{2}{*}{ResNet50} & layer3 & $0.796 \pm 0.02$ / $0.204 \pm 0.02$ & $0.78 \pm 0.00$ / $0.22 \pm 0.00$ & $0.078 \pm 0.07$ / $0.922 \pm 0.07$ & $0.00 \pm 0.00$ / $1.00 \pm 0.00$ \\
        & layer4 & $1.000 \pm 0.00$ / $0.000 \pm 0.00$ & $1.00 \pm 0.00$ / $0.00 \pm 0.00$ & $0.60 \pm 0.54$ / $0.40 \pm 0.54$ & $0.00 \pm 0.00$ / $1.00 \pm 0.00$ \\
        \bottomrule
    \end{tabular}
    }
\end{table*}

\textbf{Stable Diffusion v1-5 with LoRA}: We used our base generation model as SD v1-5 and updated its weights using LoRA during preference optimization step. This version of SD was finetuned from SD v1-2 on ``laion-aesthetics v2 5+'' dataset with 10\% drop in text-conditioning for better CFG sampling. In our experiments, we kept LoRA rank to 8 with a scaling factor of 8 and initial weights were defined from a gaussian distribution. We only targeted the transformer modules of U-Net in the SD architecture.

\textbf{DQN}: We use a DQN with specific parameters tailored to effectively navigate a vast search space. We utilized a small buffer size of 100, which limits the number of past experiences the model can learn from, encouraging more frequent updates. The exploration rate was set at 0.95, prioritizing exploration significantly to ensure thorough coverage of the search space. The batch size was configured to 32. We set the discount factor to 0.99 and the update frequency was set at every four steps. The model updates its parameters with a the soft update coefficient of 1.0. Gradient steps was set to 1 indicating a single learning update from each batch, and gradient clipping was capped at 10 to prevent overly large updates.

\textbf{BLIP}: We utilize the Bootstrapped Language Image Pretraining (BLIP) model for the task of Visual Question Answering (VQA). This model, sourced from the pre-trained version available at 'Salesforce/blip-vqa-capfilt-large', is designed to generate context-aware responses to visual input by leveraging both image and language understanding. The large variant of the BLIP model is fine-tuned for VQA, allowing it to effectively interpret and answer questions based on the visual content provided.

\subsection{RLPO Algorithm}
\label{sec:sup_rl_algorithm}

Algorithm~\ref{algorithm2} presents the detailed version of the algorithm introduced in Section~\ref{sec:methodology}, Algorithm~\ref{algorithm1}.

\begin{algorithm}[ht]
\caption{DQN Algorithm with DPO and Adaptive Reward}
\label{algorithm2}
\begin{algorithmic}
\STATE \textbf{Input:} Set of test images, $f(\cdot)$
\STATE Initialize Q-network \( Q_{\theta}(s, a) \) with random weights \( \theta \)
\STATE Initialize replay buffer \( \mathcal{D} \) and adaptive parameter \( \xi \leftarrow 0.1 \)

\FOR{each episode}
    \FOR{each time step \( t \)}
        \STATE Observe state \( s_t \) and select action \( a_t \) based on \( Q \) (\(\epsilon\)-greedy)
        \STATE Execute \( a_t \) and generate 10 images, divided into two groups \( G_1 \) and \( G_2 \)
        \STATE Evaluate TCAV scores \( \overline{TCAV}_1 \) and \( \overline{TCAV}_2 \)

        \IF{\(\max(\overline{TCAV}_1, \overline{TCAV}_2) \leq 0.7\)}
            \STATE Update policy to favor higher \( \overline{TCAV} \) group and perform DPO
            \STATE Update \( \xi \leftarrow \min(1, \xi + \text{increment}) \)
        \ELSE
            \STATE Set \( \xi \leftarrow 1 \)
        \ENDIF

        \STATE Compute reward \( r_t = \xi \cdot \max(\overline{TCAV}_1, \overline{TCAV}_2) \)
        \STATE Store transition \( (s_t, a_t, r_t, s_{t+1}) \) in \( \mathcal{D} \)
        \STATE Sample a mini-batch from \( \mathcal{D} \)

        \FOR{each sampled transition \( (s_i, a_i, r_i, s_{i+1}) \)}
            \STATE Compute target \( y_i = r_i + \gamma \max_{a'} Q_{\theta'}(s_{i+1}, a') \)
        \ENDFOR

        \STATE Compute loss \( L(\theta) = \frac{1}{N} \sum_{i=1}^N ( y_i - Q_{\theta}(s_i, a_i) )^2 \)
        \STATE Perform a gradient descent step to update \( \theta \)
        \STATE Periodically update target network: \( \theta' \leftarrow \tau \theta + (1 - \tau) \theta' \)
    \ENDFOR
\ENDFOR
\STATE \textbf{Output:} Set of concept images
\end{algorithmic}
\end{algorithm}

\subsection{Preprocessing for Generating the Action Space}
\label{sec:sup_a_space}


Steps not discussed in Section~\ref{sec:a_space}.

\begin{figure}[ht]
    \centering
    \includegraphics[width=1\linewidth]{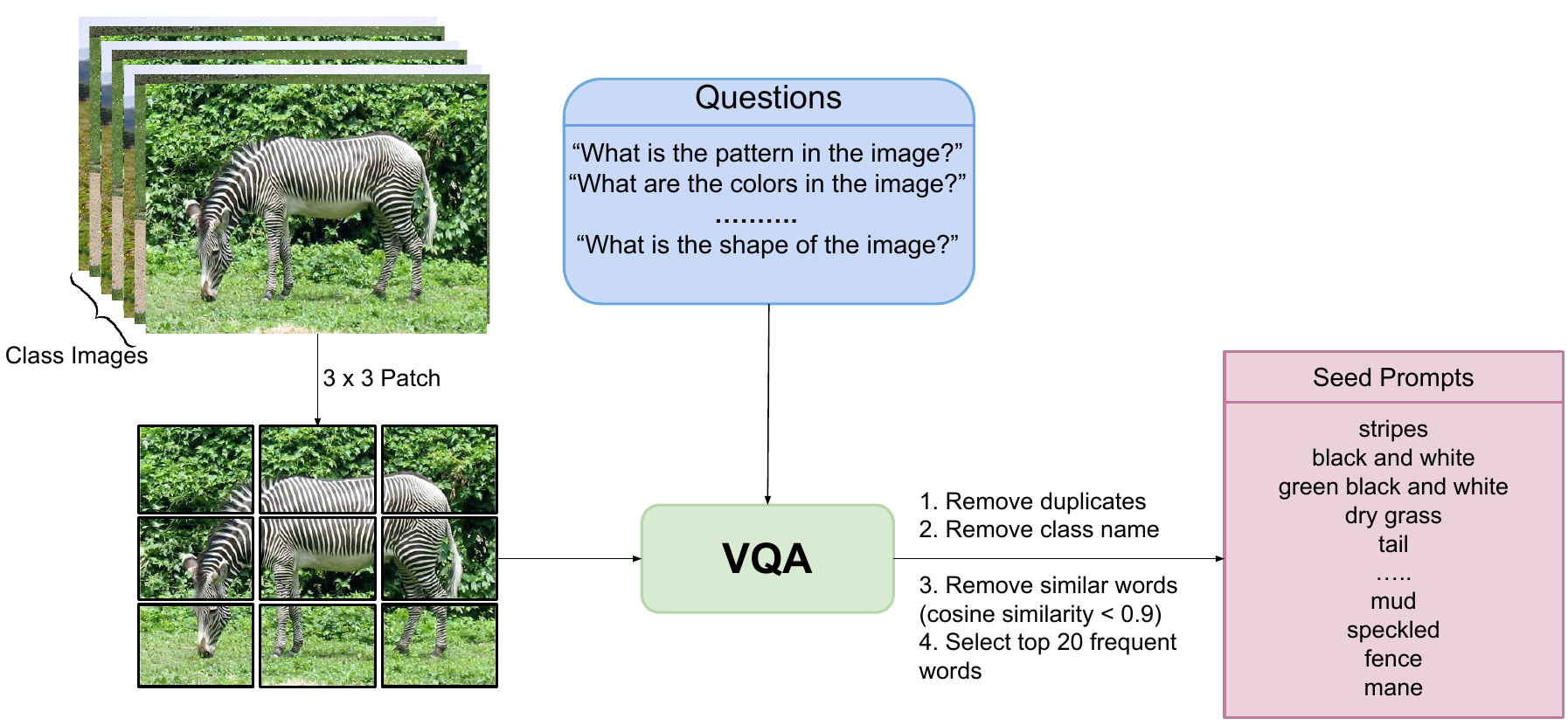}
    \caption{Seed prompt pipeline}
    \label{fig:enter-label}
\end{figure}

Each patch from the test images is passed to the VQA model to extract relevant and useful information about the corresponding class. In this study, we choose BLIP~\cite{li2022blip} as our VAQ model. We posed a set of targeted questions to the VQA model, aiming to gain insights into the class-specific features represented in the patches. The questions are designed to probe various aspects of the image patches, helping the model focus on class-defining attributes.

\begin{enumerate}
    \setlength\itemsep{-0.1em} 
    \item ``What is the pattern in the image?''
    \item ``What are the colors in the image?''
    \item ``What is the background color of the image?''
    \item ``What is in the background of the image?''
    \item ``What is the primary texture in the image?''
    \item ``What is the secondary texture in the image?''
    \item ``What is the shape of the image?''
    
\end{enumerate}

We then remove stop words and duplicates from the generated responses using lemmantizing and perform a cross-similarity check using CLIP between all the unique words and further filtered words which are more than 95\% similar. To further select most relevant keywords to the class images, we perform a VLM check using class images and the extracted keyword to get the softmax score of how much the keyword and image are related. This score is then averaged over all the class images and this average is use to sort the keywords. Now, from the sorted keywords, we select top 20 keywords as our RL action space. The cross-similarity and VLM check are inspired from \cite{zang2024pre} where they used a similar filtering setup to remove potentially useless concepts.

\subsection{Exploration of Random Gibberish Prompts as Seed Prompts}
\label{sec:random_prompt_exp}
In this experiment we don't use a VQA to get seed prompts. We choose a random list of incoherent prompts, example shown in Fig.~\ref{fig:gibberish_prompts}. We found that for these prompts it takes a really long time to get some meaningful explanation and in most cases lead to random generation, thus showing importance of starting from good proxy concepts.

\begin{figure}[ht]
    \centering
    \small
    \begin{tcolorbox}[colback=dodgerblue!10, colframe=dodgerblue!75!black, title=Gibberish Seed Prompts]
\begin{enumerate}
    \item dKgN MTW8bvbxB6aW1L2TfTuTYZK3He0urbEEmclEpY
    \item se-L8fPe19ZzUmuM uDYVYusFnYtNZeFM1YqXdE57Y7OMD3Z80cKwLo5
    \item CzKLTlZZnWHjtBn80wIfC z8O
    \item mhtxqyH2FBEC
    \item SWEC6Wlqfpqaz PQjoGrxIuzm m2ua8oGJySIeG2NqCG9BBvU9Eerj7wheWk7j-t
\end{enumerate}
\end{tcolorbox}
    \caption{Sample gibberish seed prompt used with RLPO on GoogleNet classifier to generate concepts.}
    \label{fig:gibberish_prompts}
\end{figure}

\subsection{Preference Optimization Update for State Space}
\label{sec:sup_s_space}

Steps not discussed in Section~\ref{sec:s_space}.

The candidate concepts serve as the initial states for the RL agent. From these initial states, the agent takes actions $a \in \text{Keywords}$ that leads to multiple subsequent possible states using $g(.)$. These states are then grouped, and the group's sensitivity is compared against Inputs of $f(\cdot)$ using TCAV scores. A higher TCAV score suggests higher sensitivity, indicating that the group is more aligned with $f(\cdot)$'s inputs.

We employ preference optimization over the grouped states to guide states towards explainable concepts. To prevent the model from skipping over explainable states and directly reaching the input domain, we introduce a threshold that limits the application of preference optimization at each step as shown in equation \ref{eq1}. 

\begin{align}
  &\text{Given two groups of samples } G_1 \text{ and } G_2 \text{ with their average TCAV scores } \overline{TCAV}_1 \text{ and } \overline{TCAV}_2: \notag \\
  &\text{if } \max(\overline{TCAV}_1, \overline{TCAV}_2) \leq 0.7, \text{ update } \pi \text{ to favor the group with higher } \overline{TCAV}.
  \label{eq1}
\end{align}

To optimize $g(.)$ to find better proxies, for each step in the environment we utilized average TCAV scores \(\overline{TCAV}_1\) and \(\overline{TCAV}_2\) from \(G_1\) and \(G_2\) to decide between preferred and unpreferred concepts. Lets say \(\overline{TCAV}_1 \succ \overline{TCAV}_2\), than we optimize $g(.)$ over the sample S defined as \(S = \{(a, x^{g1}_0, x^{g2}_0)\}\), where \(x^{g1}_0\) and \(x^{g2}_0\) are the sample points from the groups on action \(a\). We optimize $g(.)$ using objective \ref{eq:dpo_ours} to get a new optimzed $g'(.)$ ~\cite{wallace2023diffusion}.

\begin{align}
L(\theta) = & -\mathbb{E}_{(x^{g1}_0, x^{g2}_0) \sim S, t \sim U(0, T), x^{g1}_t \sim q(x^{g1}_t | x^{g1}_0), x^{g2}_t \sim q(x^{g2}_t | x^{g2}_0)} \log \sigma \left( -\beta T \omega(\lambda_t) \right. \nonumber \\
& \left. \left( \|\epsilon^{G1} - \epsilon_{g'(.)}(x^{G1}_t, t)\|_2^2 - \|\epsilon^{G1} - \epsilon_{g(.)}(x^{G1}_t, t)\|_2^2 \right. \right. \nonumber \\
& \left. \left. - \left( \|\epsilon^{G2} - \epsilon_{g'(.)}(x^{G2}_t, t)\|_2^2 - \|\epsilon^{G2} - \epsilon_{g(.)}(x^{G2}_t, t)\|_2^2 \right) \right) \right)
\label{eq:dpo_ours}
\end{align}

where \(x^*_t = \alpha_t x^*_0 + \sigma_t \epsilon^*\), \(\epsilon^* \sim \mathcal{N}(0, I)\) is drawn from \(q(x^*_t|x^*_0)\). \(\lambda_t = {\alpha^2_t} / {\sigma^2_t}\) is the signal-to-noise ratio, and \(\omega(\lambda_t)\) is weighting function (constant in practice).

\subsection{TCAV Setting for Different Models}
We tested different models on different layers and classes and the summary of our setting across different models is described in table \ref{tab:tcav_setting}.

\begin{table}[ht]
  \caption{TCAV setting across different models}
  \centering
  \begin{tabular}{ccc}
    \toprule
   \text{Models} & \text{Layers} & \text{ImageNet Classes} \\
    \midrule
    GoogleNet & inception4e layer & Goldfish, Tiger, Zebra \& Police Van \\
    InceptionV3 & Mixed\_7c layer & Goldfish, Tiger, Lionfish \& Basketball \\
    Vision Transformer (ViT) & heads layer & Goldfish, Golden Retriever, Tiger \& Cab \\
    Swin Transformer & head layer & Goldfish, Jay, Siberian husky \& Tiger \\
    \bottomrule
  \end{tabular}
  \label{tab:tcav_setting}
\end{table}

\section{Experiments}

\subsection{Computing Resources}
\label{sec:computing}
The experiments were conducted on a system equipped with an NVIDIA GeForce RTX 4090 GPU, 24.56 GB of memory, and running CUDA 12.2. The system also featured a 13th Gen Intel Core i9-13900KF CPU with 32 logical CPUs and 24 cores, supported by 64 GB of RAM. This setup is optimized for high-throughput computational tasks but the experiments are compatible with lower-specification systems.

\subsection{Human and LLM-Based AI Feedback Mechanisms}
\label{sec:human_llm_fd}

We test other feedback mechanism in RL by replacing the XAI-TCAV feedback with AI and Human feedback's. Herer we discuss the experiental setup and configuration for both experiements.

\textbf{AI Feedback}: GPT-4 is leveraged to evaluate the explanatory power of image sets by focusing on concepts related to a target class, a method that aligns with the growing trend of incorporating AI-driven feedback~\cite{bai2022constitutional}. This approach involves sending a structured prompt to an LLM, asking it to score how well two sets of images explain a target class using a specified concept. The process involves the following.

\begin{enumerate}
    \item Image Encoding: The images from two sets (concept1 and concept2) are first converted into a base64 format to ensure they can be transmitted via the request as encoded strings.
    \item Structured Prompt: A detailed and specific prompt is crafted for the LLM. It asks the model to assess the quality of explanation each image set provides for a particular class through the lens of a specific concept. The prompt used is ``Please evaluate each of the following sets of images for how well they explain the class \{class\_name\} via the concept \{concept\_name\}. For each set, provide a numerical score between 0 and 1 (to two decimal places)'' The prompt clearly defines how the model should respond, asking for a numerical score between 0 and 1, where:
    \begin{enumerate}
        \item 0 indicates that the image set does not explain the class at all via the concept.
        \item 1 indicates that the image set perfectly explains the class via the concept.
    \end{enumerate}
    \item LLM-Based Scoring: Once the prompt is sent to the LLM, it evaluates the image sets and provides scores based on its learned knowledge and understanding. The response is parsed to extract the scores for each set of images.

\end{enumerate}

\textbf{Human Feedback}: In this experiment, eight computer science majors provided live feedback after each step of a reinforcement learning process, leveraging their prior knowledge of reinforcement learning with human feedback (RLHF) mechanisms~\cite{christiano2017deep}. The feedback from all participants was averaged to serve as the reward for each step in the RL process. Given the abstract nature of the initial concepts, participants needed to take time to thoughtfully assess each step, which contributed to a lengthier feedback cycle.

\subsection{Additional Results and Analysis}
\label{sec:add_results}
To validate our method for its ability to generate concepts, we tested it with different models and classes. We started it on traditional models, GoogleNet and InceptionV3, and then extended it to transformer-based models, Vision Transformer (ViT) and Swin Transformer, pre-trained on ILSRVC2012 data set (ImageNet)~\cite{krizhevsky2017imagenet}. 
 We show additional plot in various classes shown in Fig \ref{fig:img0},\ref{fig:img1},\ref{fig:img2},\ref{fig:img3},\ref{fig:img4}.

\begin{figure}[h!]
    \centering
    \includegraphics[width=0.9\textwidth]{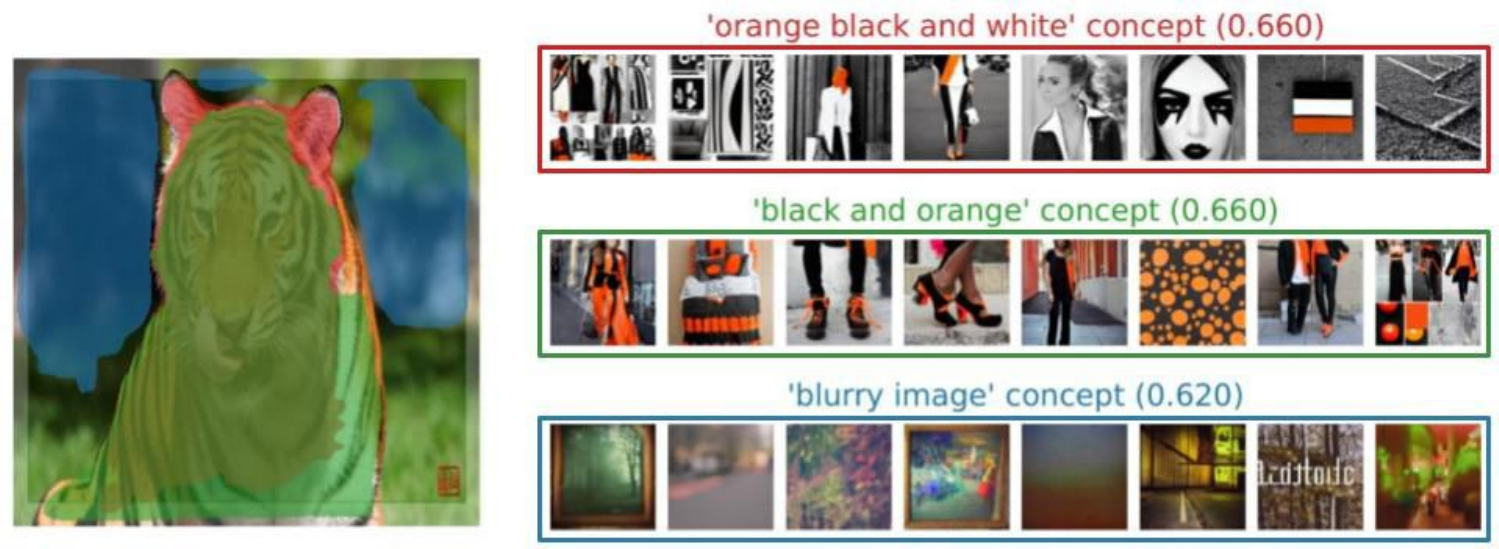}
    \caption{Explanation plot of Tiger classification by GoogleNet  from RLPO.}
    \label{fig:img0}
\end{figure}

 \begin{figure}[h!]
    \centering
    \includegraphics[width=0.9\textwidth]{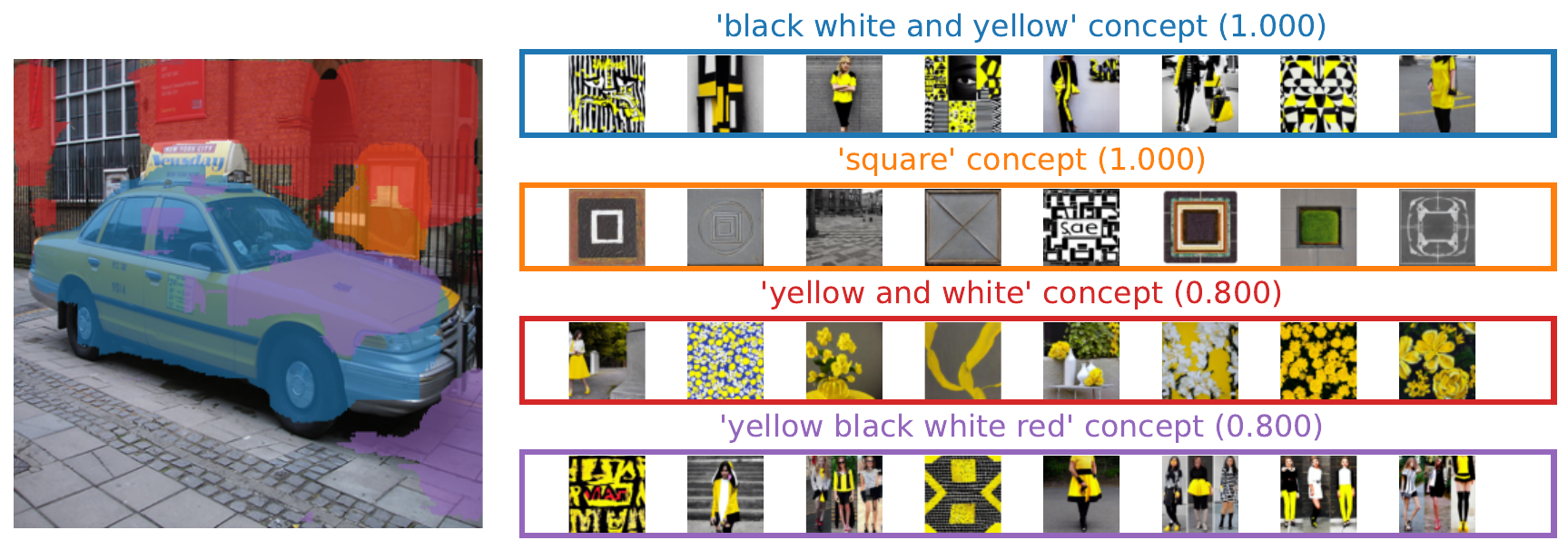}
    \caption{Explanation plot of Cab classification by ViT from RLPO.}
    \label{fig:img1}
\end{figure}

\begin{figure}[h!]
    \centering
    \includegraphics[width=0.9\textwidth]{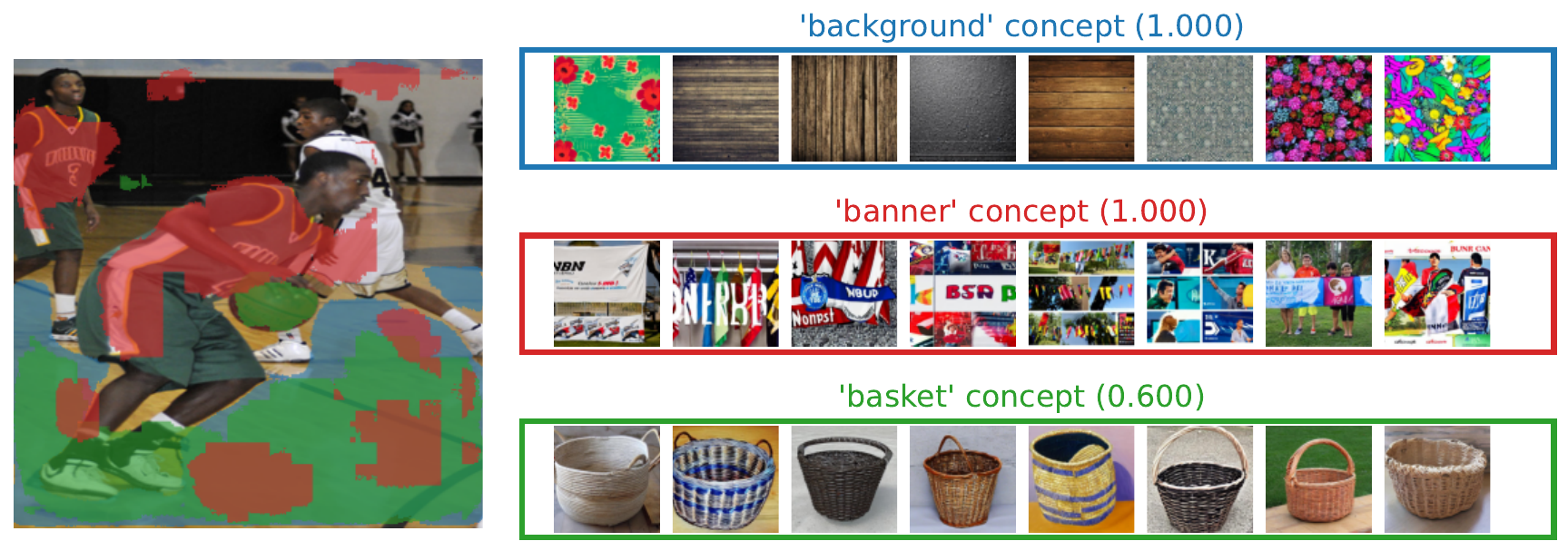}
    \caption{Explanation plot of Basketball classification by InceptionV3 from RLPO.}
    \label{fig:img2}
\end{figure}

\begin{figure}[h!]
    \centering
    \includegraphics[width=0.9\textwidth]{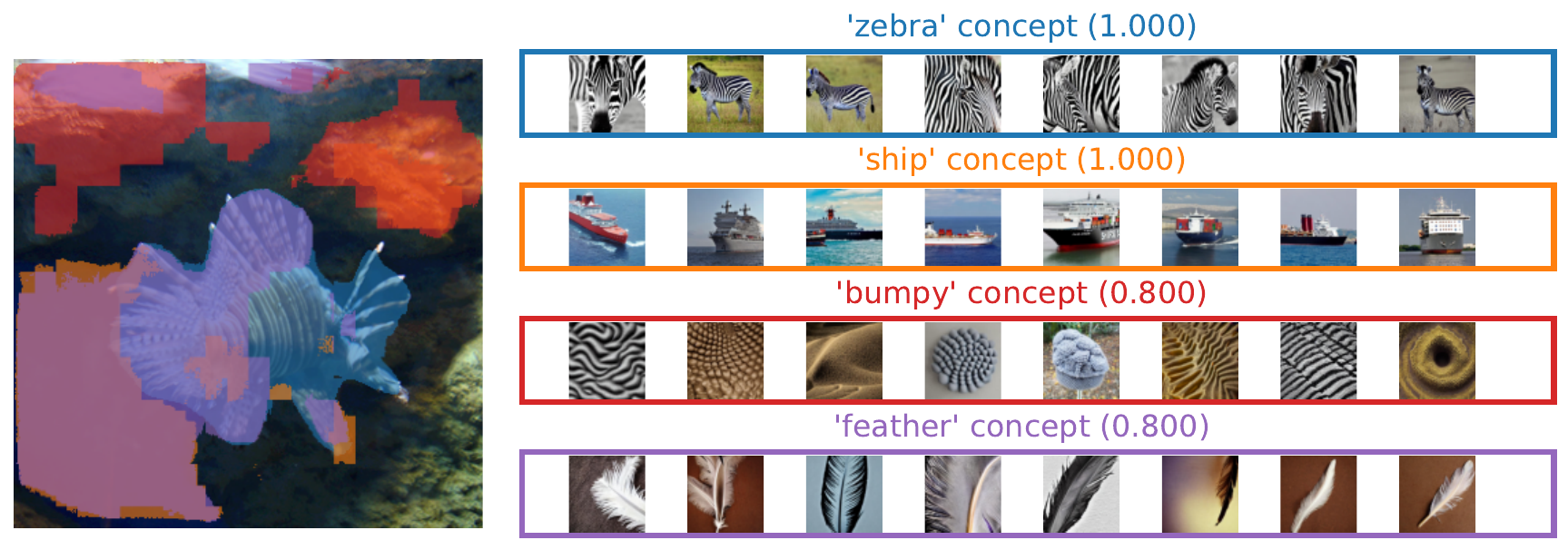}
    \caption{Explanation plot of Lionfish classification by InceptionV3 from RLPO.}
    \label{fig:img3}
\end{figure}

\begin{figure}[h!]
    \centering
    \includegraphics[width=0.9\textwidth]{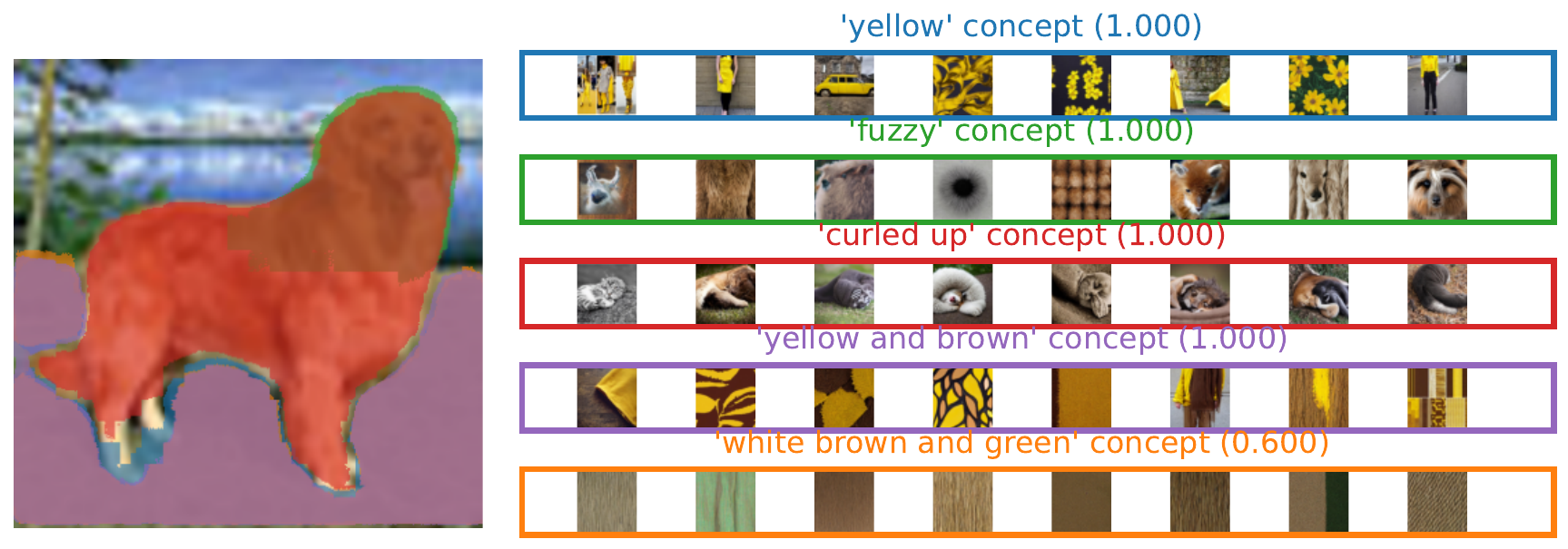}
    \caption{Explanation plot of Golden Retriever classification by ViT from RLPO.}
    \label{fig:img4}
\end{figure}

\subsubsection{Cumulative Rewards}
The cumulative rewards during training for GoogleNet and InceptionV3 is shown in Fig.~\ref{fig:RL_rewards_1}. For ViT and Swin Transformer it is shown in Fig.~\ref{fig:RL_rewards_2}. This figure illustrates the steady accumulation of rewards over time as they interact with the reinforcement learning environment. All models demonstrate a steady increase in cumulative rewards, the classes with higher reward peak reaches its explinable state faster.
\begin{figure}[ht]
    \centering
    \includegraphics[width=0.45\textwidth]{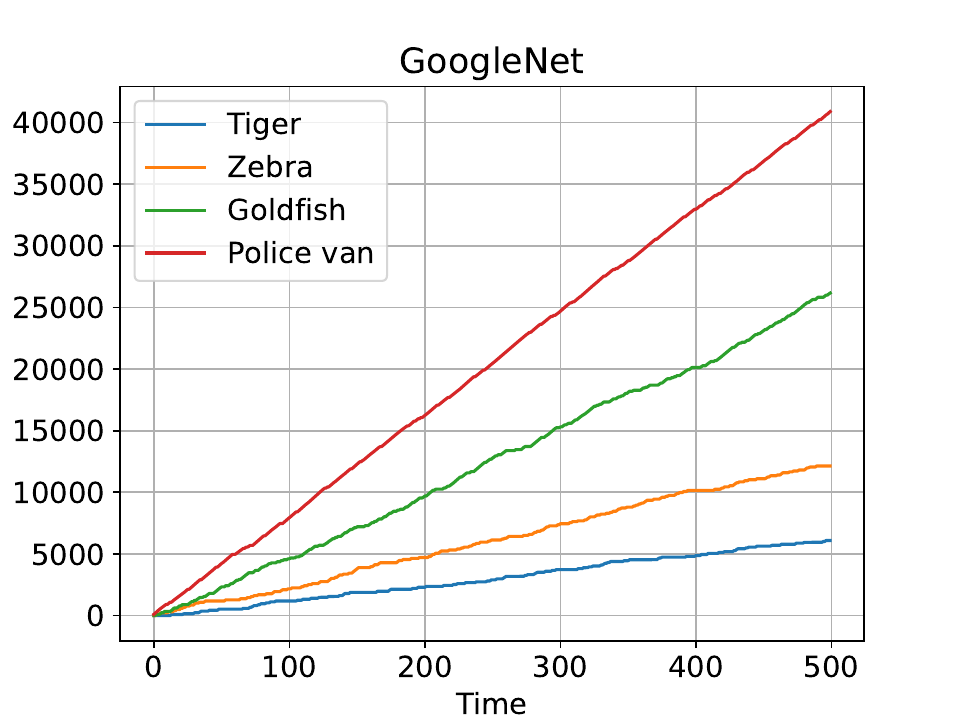}
    \includegraphics[width=0.45\textwidth]{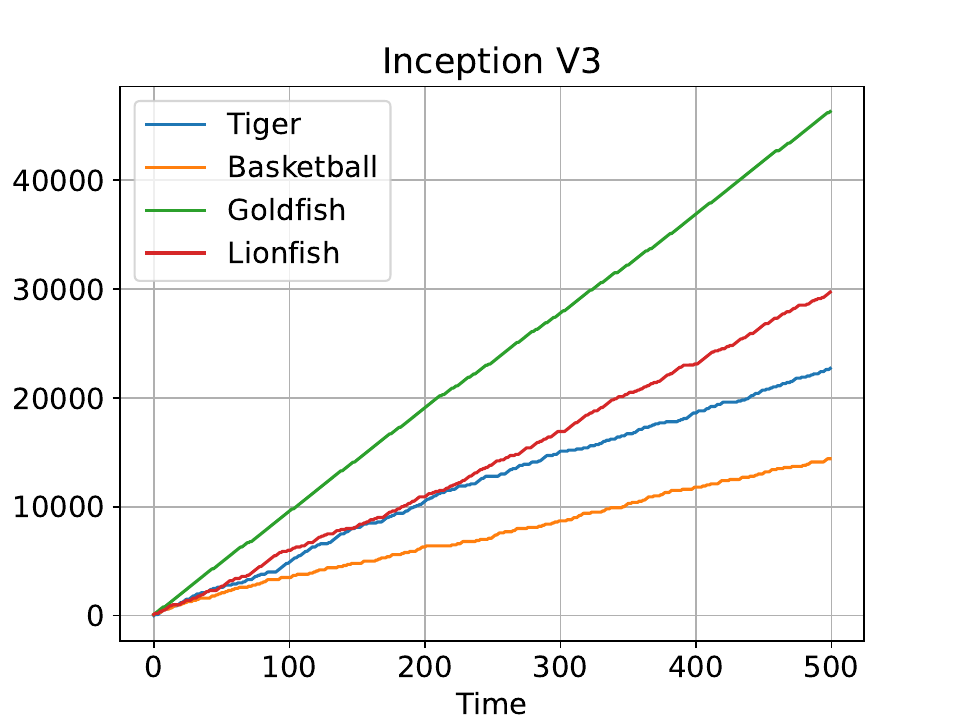}
    \caption{Cumulative rewards on traditional models.}
    \label{fig:RL_rewards_1}
\end{figure}

\begin{figure}[ht]
    \centering
    \includegraphics[width=0.45\textwidth]{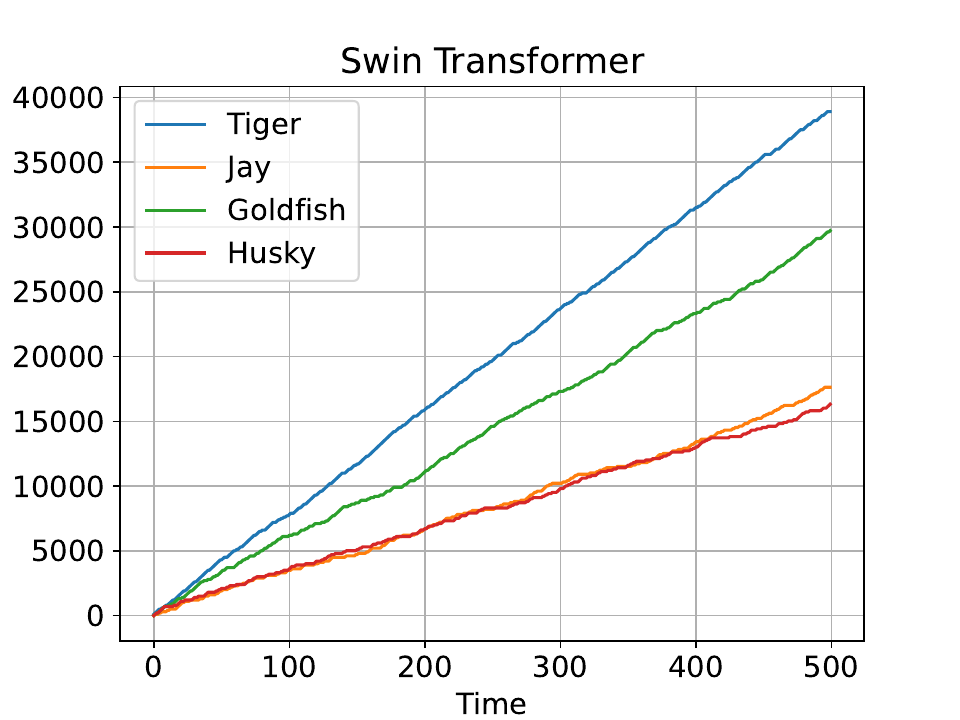}
    \includegraphics[width=0.45\textwidth]{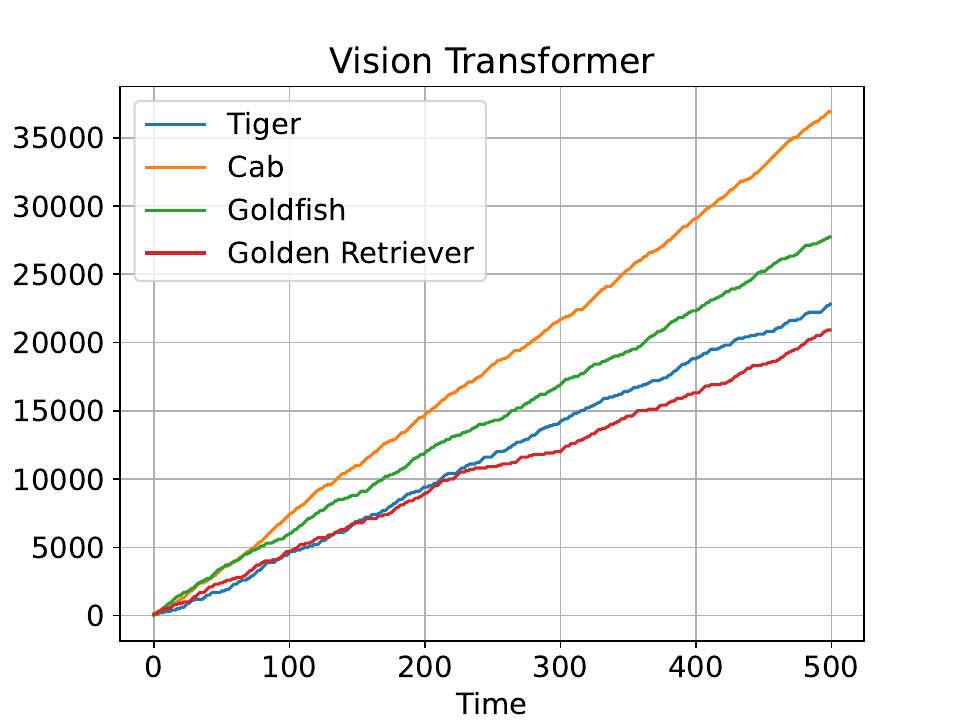}
    \caption{Cumulative rewards on transformer models.}
    \label{fig:RL_rewards_2}
\end{figure}

\subsubsection{Action Selection Optimization During RLPO Training}

\begin{figure}
    \centering
    \includegraphics[width=0.45\textwidth]{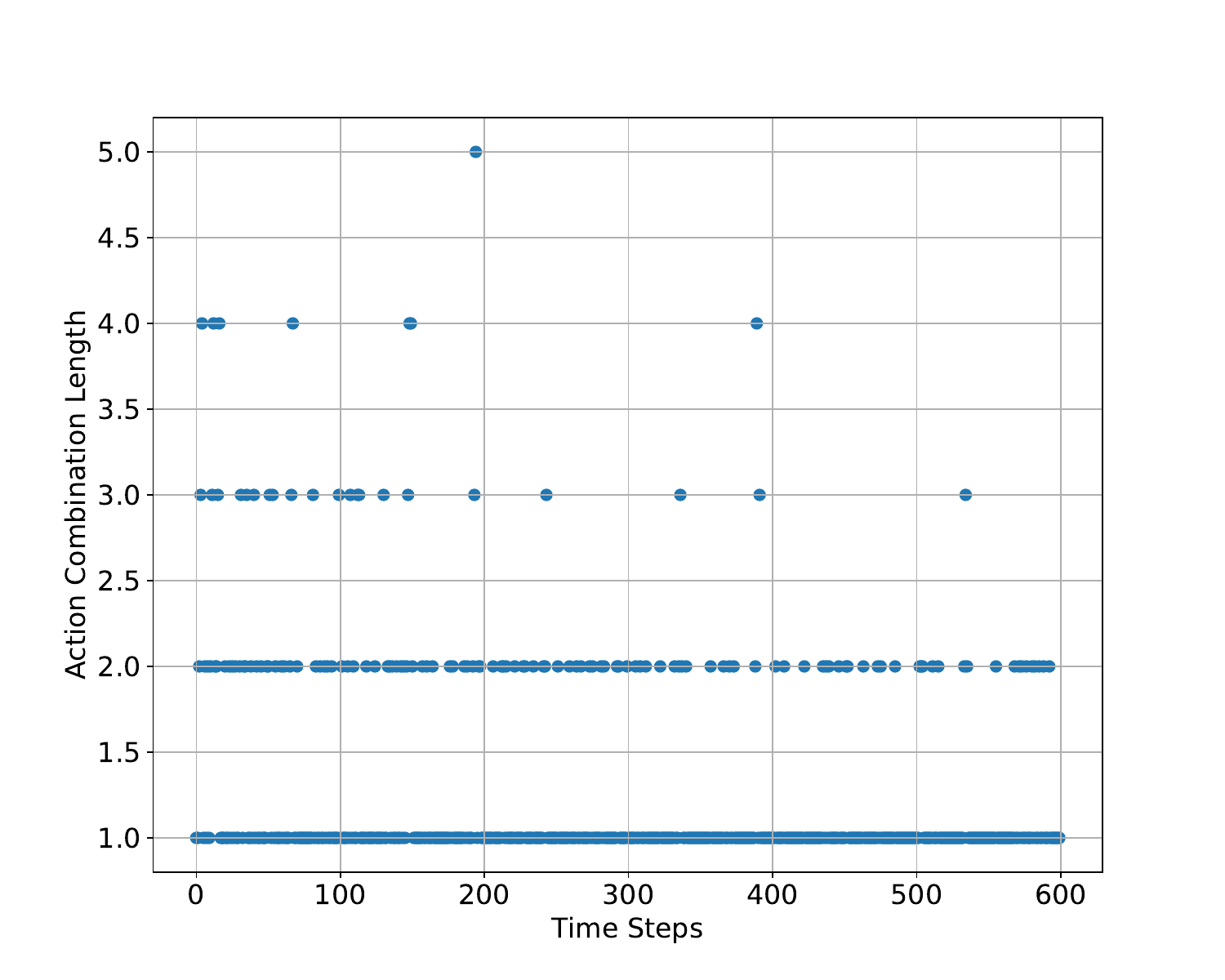}
    \caption{Combined actions (multiple keywords) count over training time}
    \label{fig:complex_actions1}
\end{figure}
 As shown in Fig.~\ref{fig:complex_actions1}, during training with multiple combinations of seed prompts, we observe that the RL agent initially explores various action combinations. However, as training progresses, individual actions become more optimized due to preference optimization (PO). This leads the agent to prefer fewer action combinations, since just choosing one or two actions makes the agent reach an explainable state.

\subsubsection{Concept Heatmap}
\label{sec:conc_heat}
To determine the relationship between generated concepts and test images, we made use of CLIPSeg transformer model~\citep{lueddecke22_cvpr}. We passed generated concepts as visual prompts and test images as query images into the model and it returns a pixel-level heatmap of the probability of visual prompt in the query image. Fig.~\ref{fig:concept_heatmap1},~\ref{fig:concept_heatmap2} showcases some examples on concept heatmap indicating the presence of the concept in the image.

\begin{figure}[ht]
    \centering
    \includegraphics[width=0.4\linewidth]{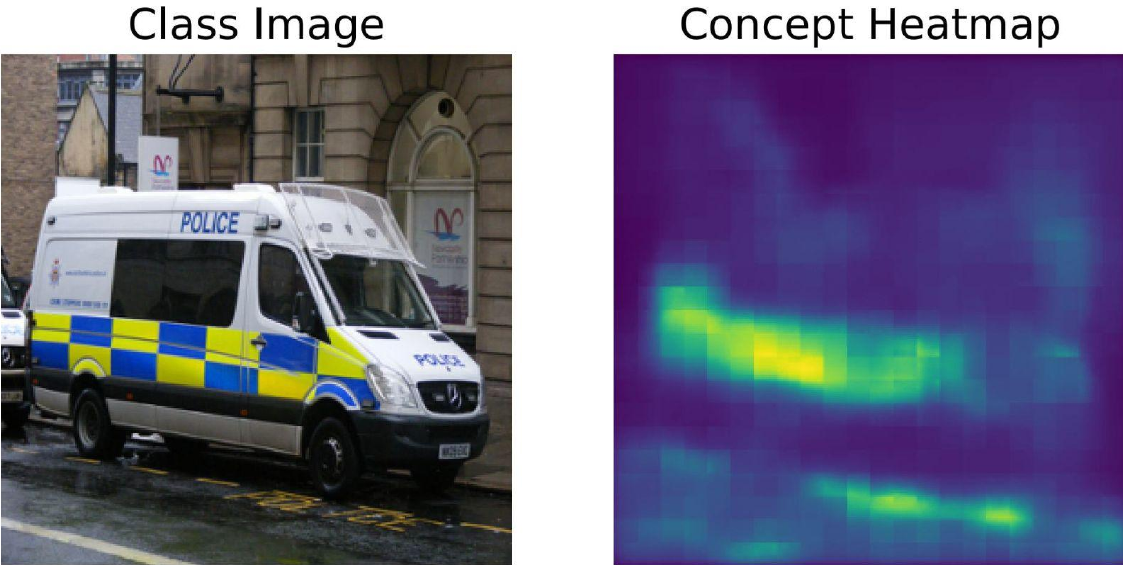}
    \hspace{0.05\linewidth}
    \includegraphics[width=0.4\linewidth]{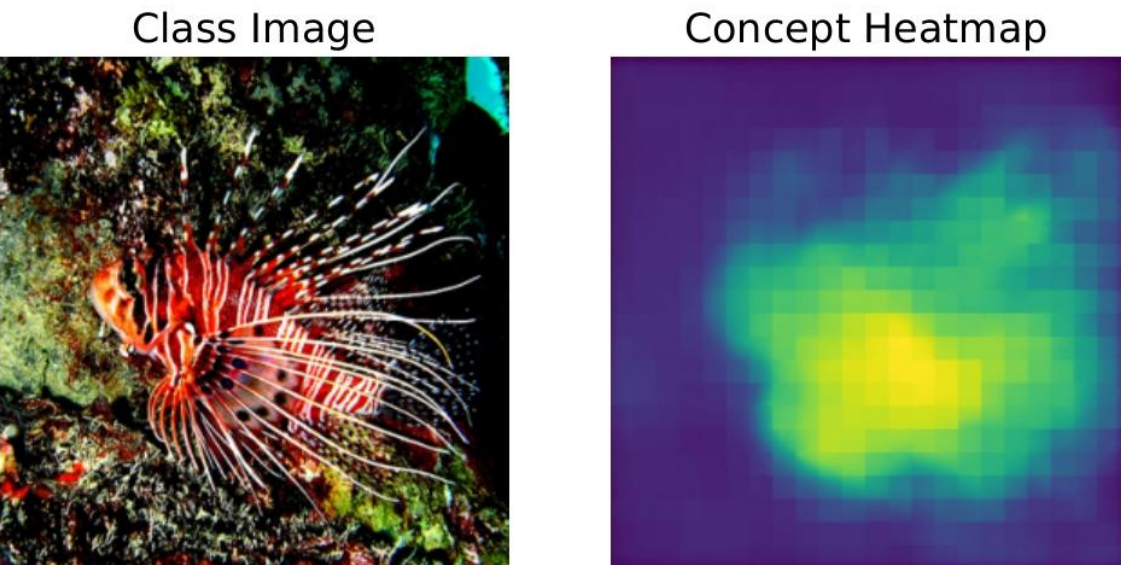}
    \caption{Van class with ``white blue and yellow'', Lion fish class with ``zebra'' seed prompt.}
    \label{fig:concept_heatmap1}
\end{figure}

\begin{figure}[ht]
    \centering
    \includegraphics[width=0.4\linewidth]{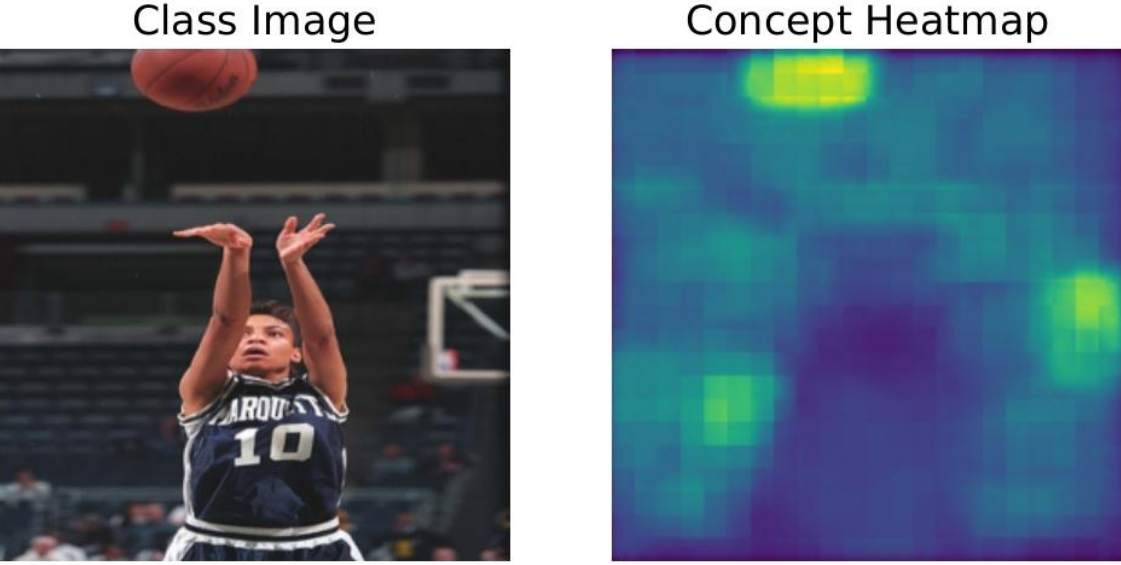}
    \hspace{0.05\linewidth}
    \includegraphics[width=0.4\linewidth]{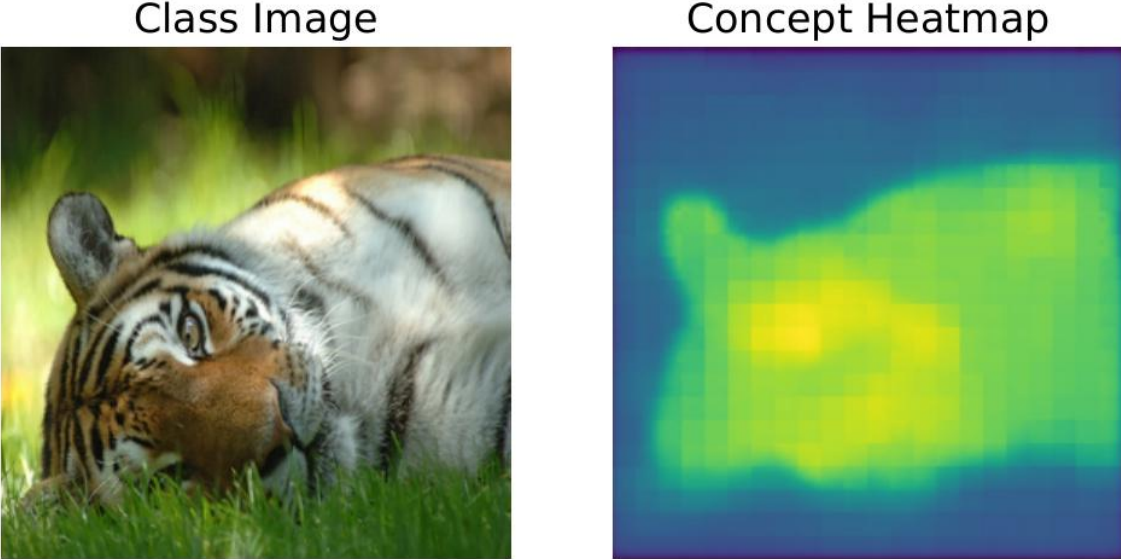}
    \vspace{-0.5em}
    \caption{Basketball class with ``basket'' seed prompt, Tiger class with ``orange black and white'' seed prompt.}
    \label{fig:concept_heatmap2}
\end{figure}


\subsubsection{C-deletion}
\label{sec:sup_cdel}
The central idea behind c-deletion in explainability is to identify and remove parts of the input context that are not crucial for the decision-making process, allowing for clearer insights into how the model arrives at its predictions or actions.

\begin{figure}[ht]
    \centering
    \includegraphics[width=1\textwidth]{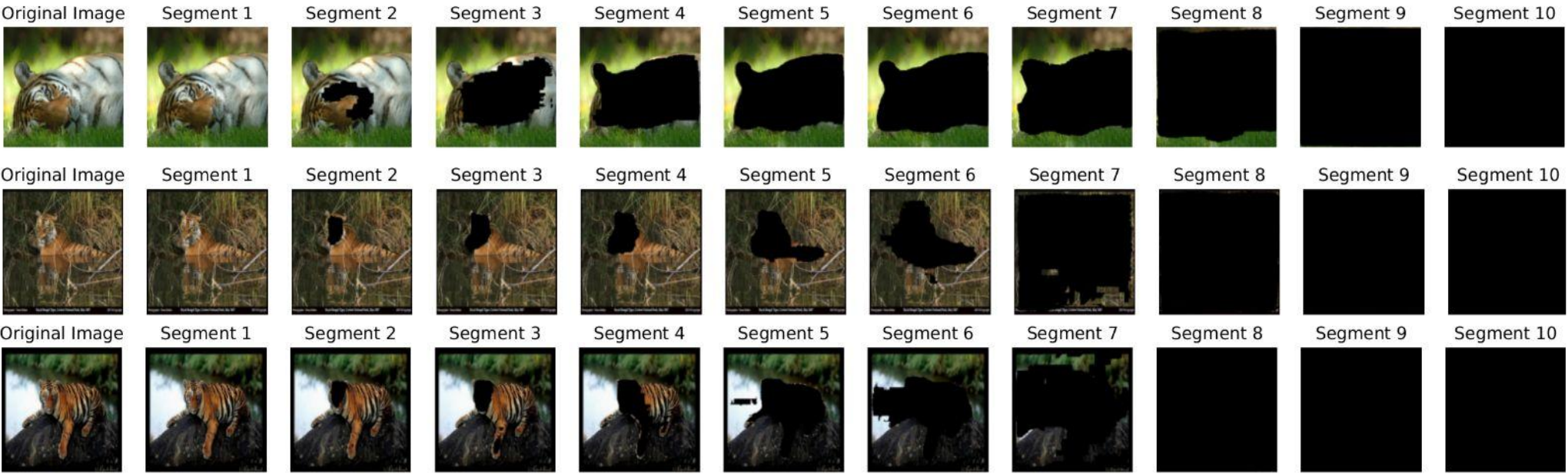}
    \caption{The figure shows c-deletion taking place for different images from ``tiger'' class over time for ``orange black and white'' seed concept.}
    \label{fig:c-deletion-over-time-tiger}
\end{figure}
\vspace{-0.5em}
\begin{figure}[h!]
\vspace{-1em}
    \centering
    \includegraphics[width=1\textwidth]{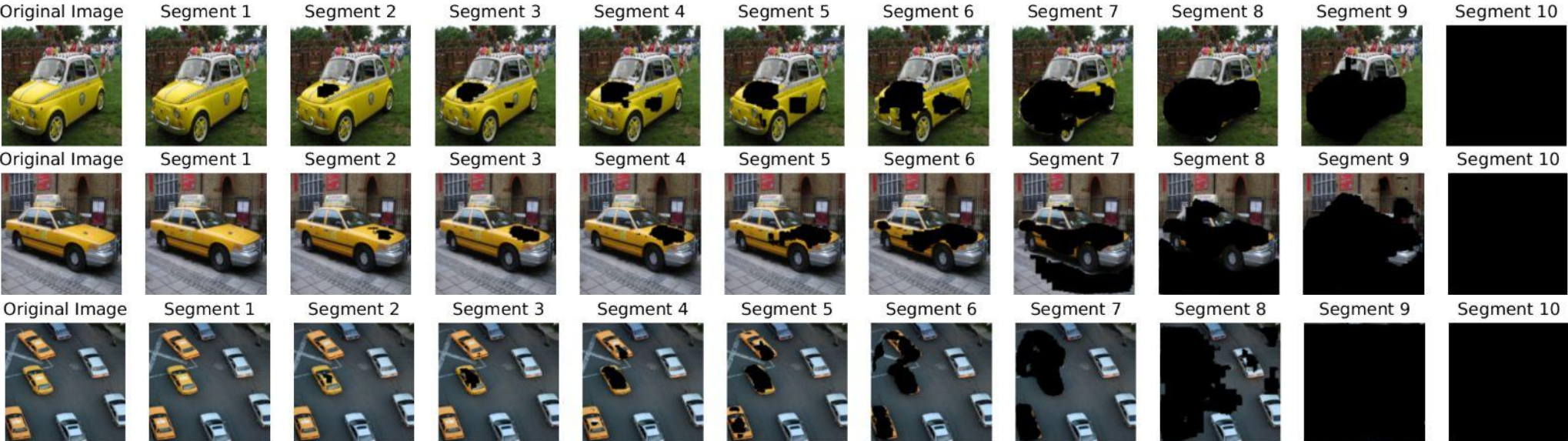}
    \caption{The figure shows c-deletion taking place for different images from ``cab'' class over time for ``yellow and white'' seed concept.}
    \label{fig:c-deletion-over-time-tiger2}
\end{figure}
\begin{figure}[h!]
    \centering
    \includegraphics[width=0.47\textwidth]{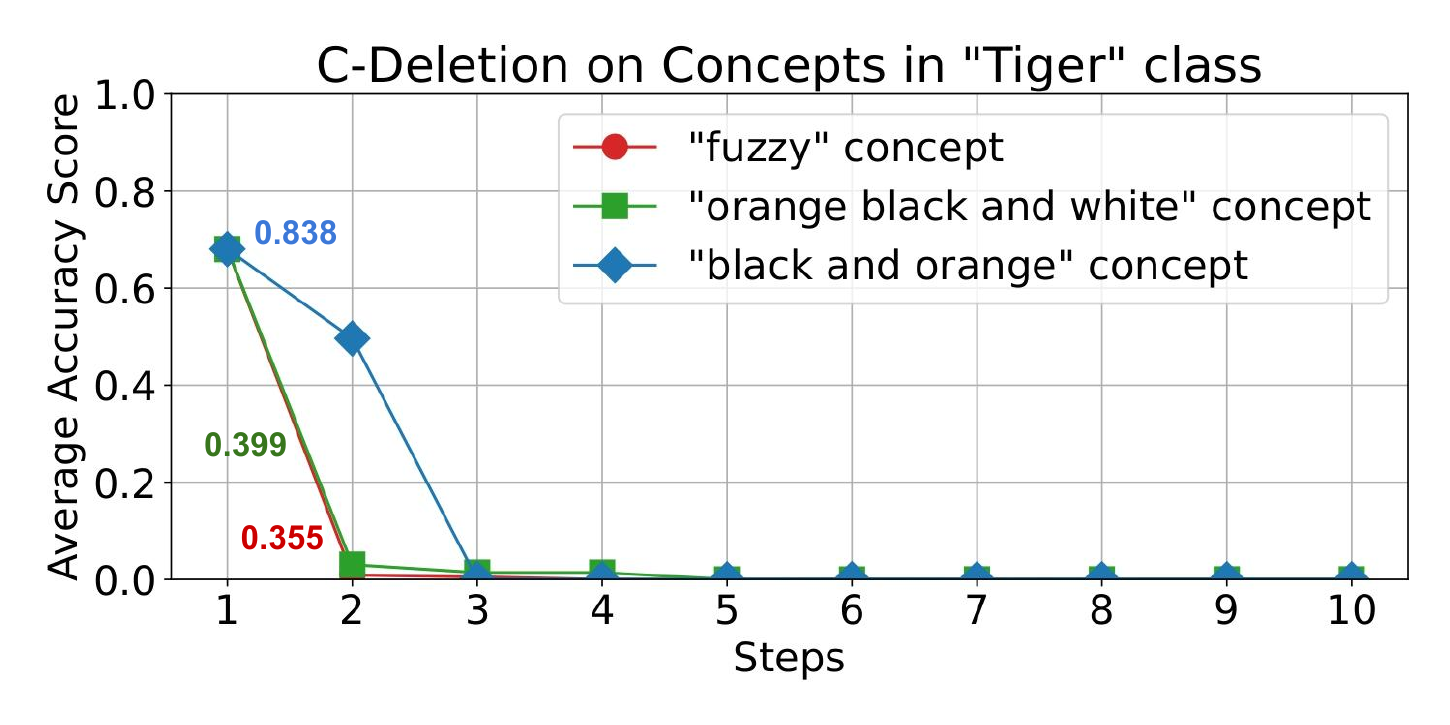}
    \includegraphics[width=0.47\textwidth]{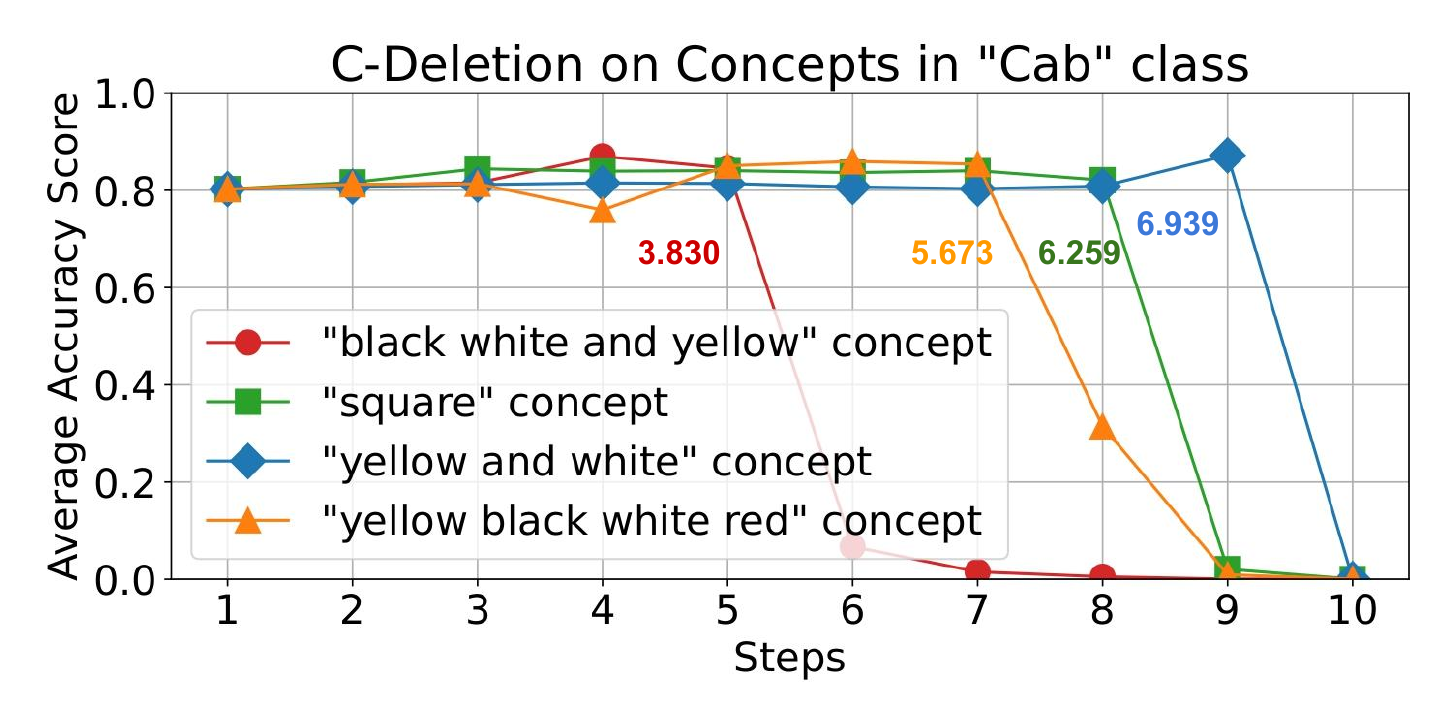}
    \caption{C-deletion. Removing concepts over time to measure the reliability. The colored numbers indicate the area under the curve (the lower the better).}
    \label{fig:c-deletion-graph}
\end{figure}

C-deletion evaluations assesses the impact of removing certain contextual inputs (features, variables, or states) on a model's performance as shown in Fig.~\ref{fig:c-deletion-graph}. 

\subsection{Abstract Concepts}
\label{app:abstract_concept}
While generating concepts with RLPO, we can observe the progression of output concepts produced by the SD model. As shown in Fig.~\ref{fig:absraction}, applying RLPO to the seed prompt ``zoo'' on the tiger class reveals different levels of abstraction. These abstractions provide insight into what the model prioritizes when identifying a tiger—progressing from a four-legged orange-furred animal to one with black and white stripes, and finally to an orange-furred animal with stripes and whiskers. We obtain concepts at varying levels of abstraction by adjusting $\eta$, though our method currently cannot determine $\eta$ to achieve a specific abstraction level.

\begin{figure}[ht]
    \centering
    \includegraphics[width=\textwidth]{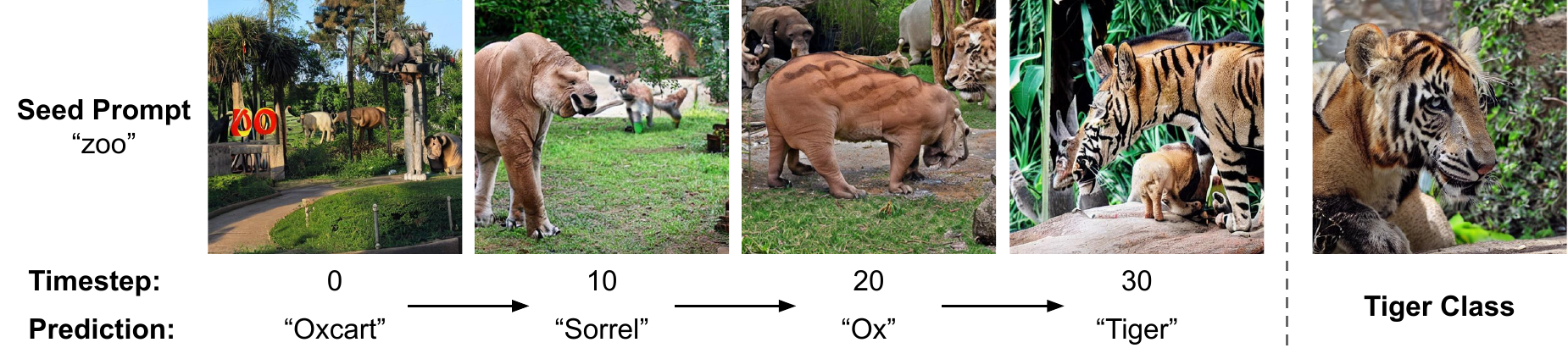}
    \vspace{-0.5em}
    \caption{Different levels of abstraction for the ``Tiger'' class on the GoogleNet classifier are illustrated. The generated image starts as a random ``zoo'' image and gradually transitions to images with tiger-like features. Observe that the seed prompt ``zoo'' becomes more animal-like at t=10, gains more stripes at t=20, develops tiger-specific colors at t=30, and progressively refines into a tiger image. The model’s prediction also evolves, starting from a random classification of ``oxcart'' to confidently identifying the generated concept as ``tiger''.}
    \label{fig:absraction}
\end{figure}

\subsection{Qualitative Comparison Between With and Without RL Preference Optimization}
\label{sec:compare_rl_presence}
\begin{figure}[ht]
    \centering
    \includegraphics[width=0.45\textwidth]{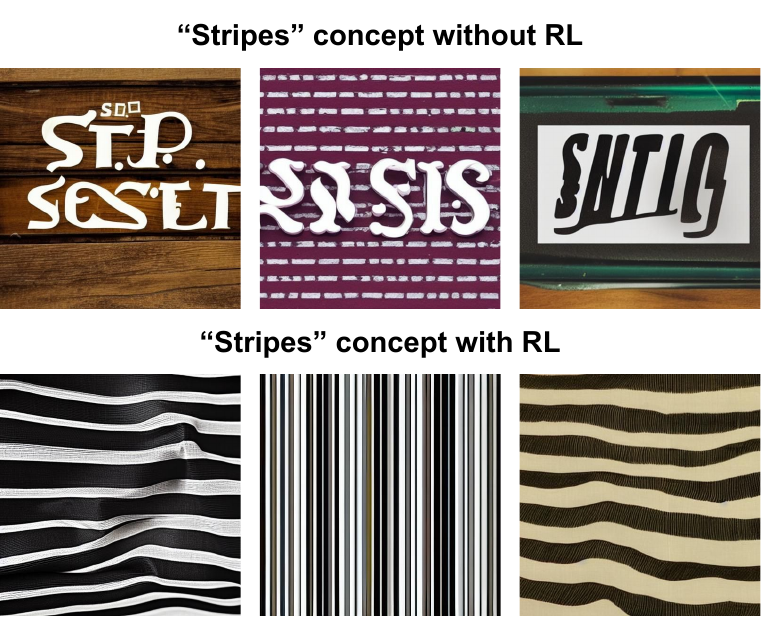}
    \caption{``Stripes'' concept generation from SD models optimized with and without RL}
    \label{fig:rl_with_without_comparision}
\end{figure}
To demonstrate the usefulness of RL based preference optimization, we compared the output generated from the diffusion model fine-tuned using RL with the one fine-tuned iteratively over every seed prompt (brute-force). As shown in Fig.~\ref{fig:rl_with_without_comparision}, if we don’t use RL to optimize, we see that stripes seed does not converge to a good quality concept compared to the one obtained using RL for the same time budget. The main reason for that is, the RL agent learns over time what trajectories are worth optimizing and drops the less explainable trajectories, highlighting the need for using RL agent while optimization.

\subsection{Qualitative Comparison With Other Popular XAI Techniques}
\label{sec:compare_xai_methods}
\begin{figure}[ht]
    \centering
    \includegraphics[width=0.75\textwidth]{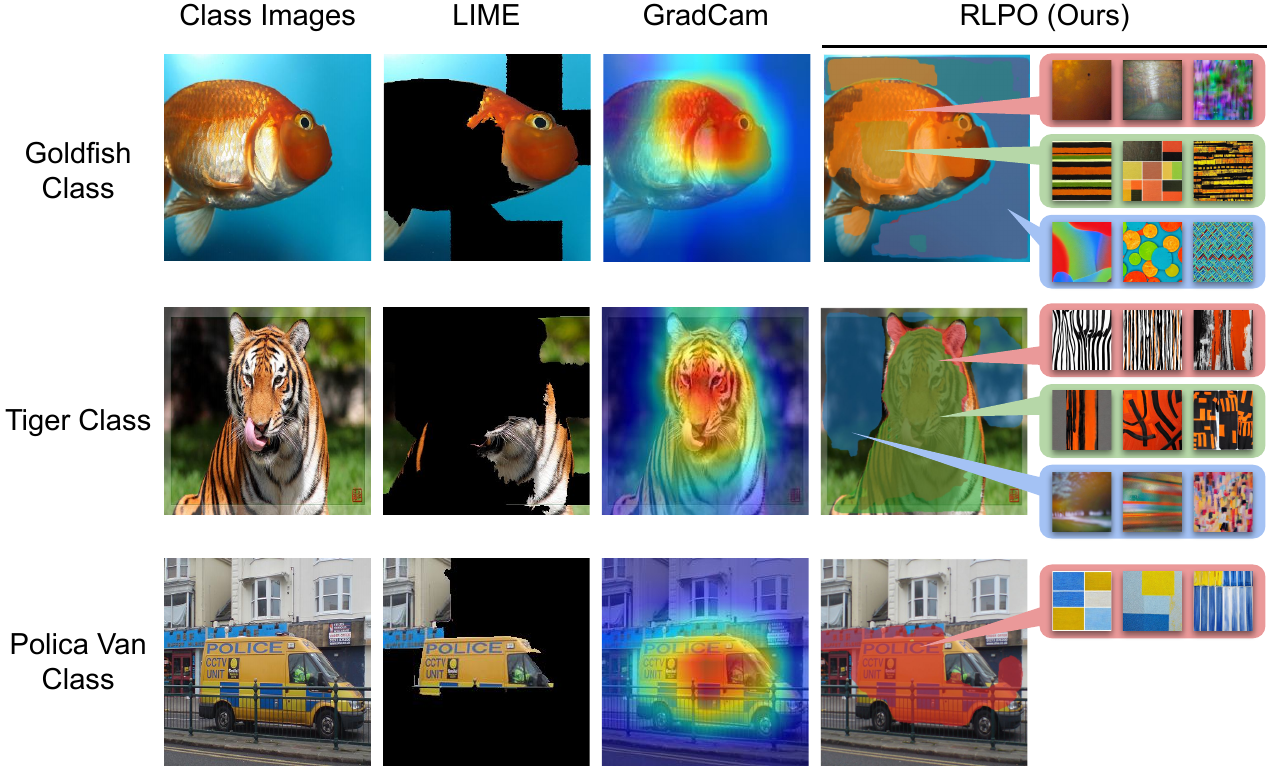}
    \caption{Comparison of concepts identified by different methods. RLPO can show the correspondences between test image and different concepts.}
    \label{fig:comparision}
\end{figure}
We compare the output generated by other popular XAI techniques such as LIME and GradCam with ones generated by RLPO. As shown in Fig.~\ref{fig:comparision}, we can see that other methods just explains where the model is looking at whereas our approach also explains what type of features is the model focuses on.

\subsection{RLPO in Sentiment Analysis}
\label{sec:rlpo_sentiment}
We extended our method to the NLP domain, successfully identifying which parts of the input contribute to specific outputs. For sentiment analysis, a binary classification problem, we present results for both positive and negative classes.

A list of positive and negative prompts was created, analogous to class images in traditional image classification tasks. Random prompts, similar to those in Fig.~\ref{fig:gibberish_prompts}, were used to simulate random classes. Every word in the prompt, excluding stop words, along with its synonyms, was treated as a concept for this experiment. Synonyms were generated using the Mistral-7B Instruct model, serving a role comparable to the image generation model in image-based settings. We observed that multiple words were identified, along with their influence on the overall prompt, for both classes (positive and negative) as shown in Fig.~\ref{fig:sentiment-tcav} and Fig.~\ref{fig:sentiment-tcav2}.

\begin{figure}[ht]
    \centering
    \includegraphics[width=0.75\linewidth]{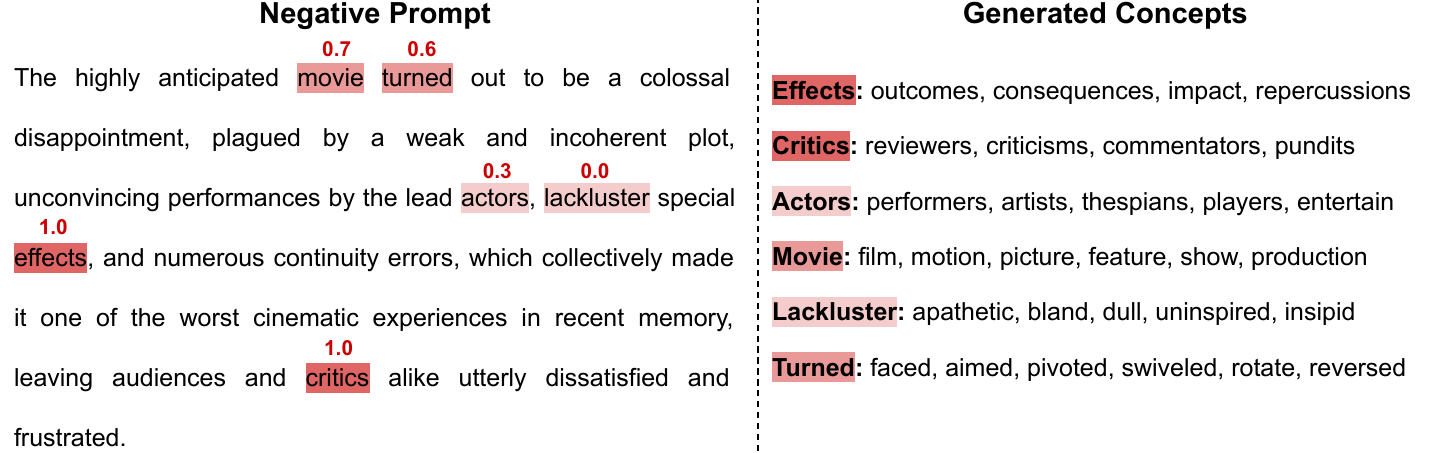}
    \caption{Generated concepts explain why a given text is classified as a negetive sentiment.}
    \label{fig:sentiment-tcav2}
\end{figure}

\subsection{Effects of Fine-Tuning}
\label{sec:eff_fine}

When fine-tuning a model, the optimization process updates its weights through gradient-based methods, causing shifts in the concepts (Fig.~\ref{fig:concept_shift}) it learns. These weight adjustments modify how the model attends to different regions or patterns in an image, leading to changes in the internal activation maps and the conceptual understanding of the input. As the model learns new concepts or refines existing ones, it adjusts its feature extraction and decision-making processes to better align with the specific objectives of the fine-tuning task, thereby altering the way it interprets and generates outputs. 

To demonstrate this we conducted an experiment. In this experiment, we choose a pretrained GoogleNet classifier for the Tiger class whose important seed prompts were ‘orange black and white’, ‘orange and black’, and ‘blurry image’ with TCAV scores of 0.66, 0.66, and 0.62, respectively. Out of these seed prompts, ‘orange black and white’ and ‘orange and black’ highlight the tiger pixels while ‘blurry image’ seed prompt highlights the background pixels (see Fig.~\ref{fig:img0}). This tells us that in order to classify a tiger, GoogleNet looks at both the foreground and background. Now, the engineers want the classifier to classify the tiger based on tiger pixels, not its background (note: from the classical Wolfe-Husky example in LIME~\cite{ribeiro2016should}, we know the spurious correlation of background).

To this end, we generated 100 tiger images based on concepts related to ‘orange black and white’ and ‘orange and black’ using a separate generative model and fine-tuned our Googlenet model. Running RLPO on this fine-tuned model revealed that the model learned some new concepts such as ‘whiskers’ and also revealed that previous concepts such as ‘orange black and white’ and ‘orange and black’ are now more important with TCAV scores of 1.0 and 1.0, respectively. This means that the classifier is now only looking at tiger pixels, not the background. Dataset samples are shown in Fig.~\ref{fig:fine_tune_dataset}.

\begin{figure}[ht]
    \centering
    \includegraphics[width=0.7\linewidth]{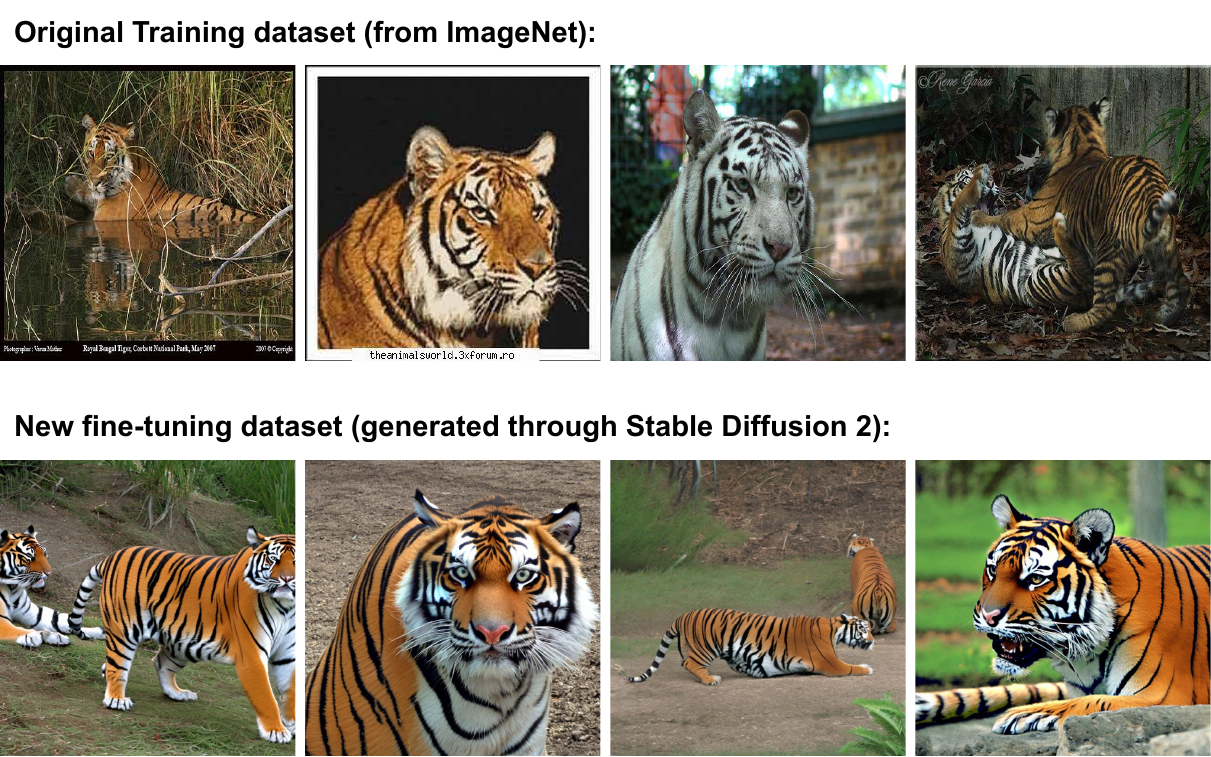}
    \caption{Dataset samples used to fine-tune GoogleNet classifier on Tiger class.}
    \label{fig:fine_tune_dataset}
\end{figure}


\subsection{Human Survey: Understanding Human Capabilities}
\label{sec:human_survey}
The survey involved 50 participants, each of whom was shown 10 class images along with two concept options as shown in Fig.~\ref{fig:survey}: one derived from a retrieval-based method and the other generated using RLPO. The participants were divided into Laymen and Experts.
\begin{enumerate}
    \item Expert: Computer science graduates who are familiar with the concept of explainability and have a working knowledge of AI or machine learning systems.
    \item Laymen: Individuals without expertise in computer science, AI, or explainability, representing the general public's perspective.
\end{enumerate}

\begin{figure}[ht]
    \centering
    \includegraphics[width=0.8\textwidth]{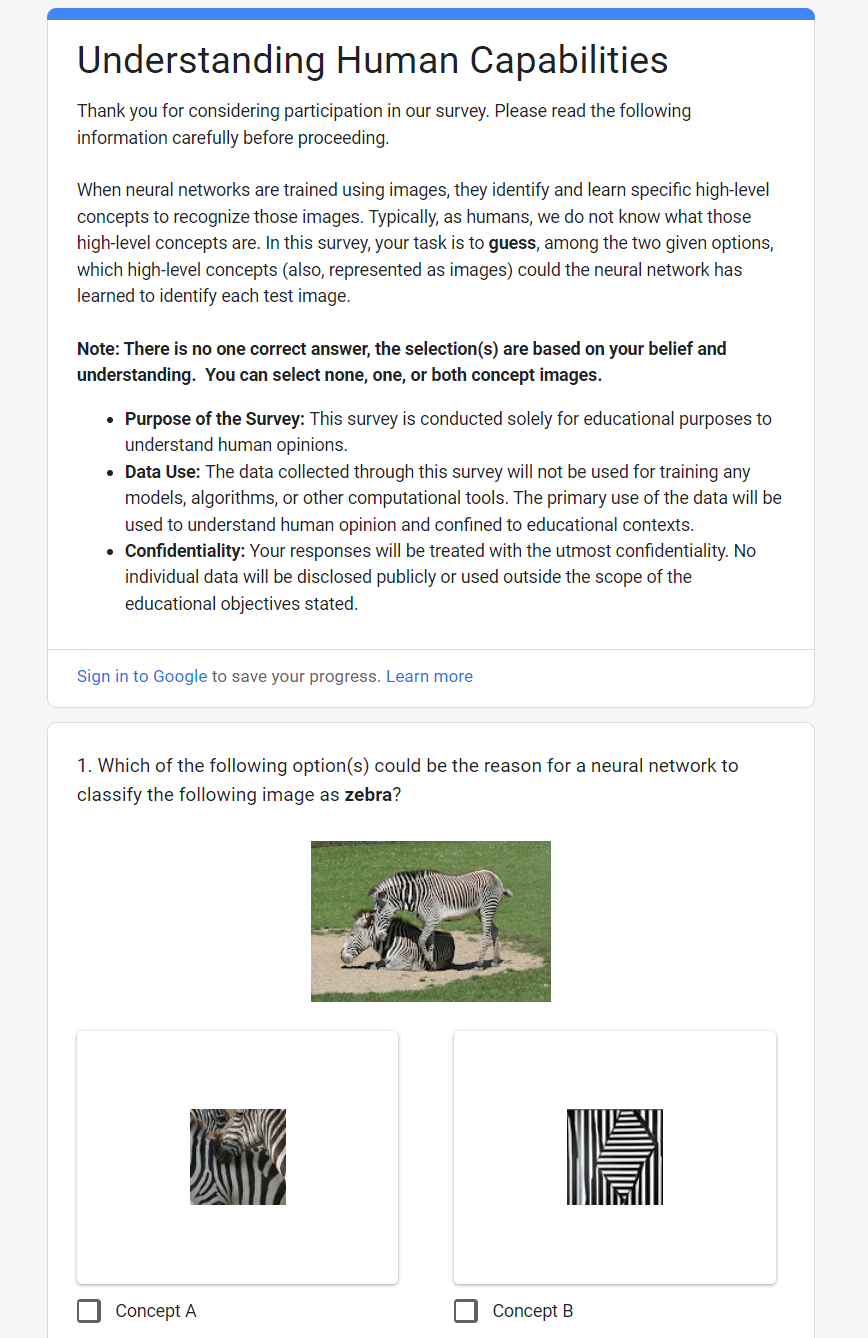}
    \caption{A screenshot from our human survey with instructions and a sample question.}
    \label{fig:survey}
\end{figure}

\subsection{Human Study: Usability}
\label{sec:human_study}

We conducted a human study to evaluate the usefulness of the provided explanations, involving 19 Machine Learning (ML) engineers. Fig.~\ref{fig:survey_usability} shows the survey used while conducting the human study. This study was approved by the Institutional Review Board of Arizona State University (IRB ID STUDY00021561).

\begin{figure}[ht]
    \centering
    \includegraphics[width=0.45\textwidth]{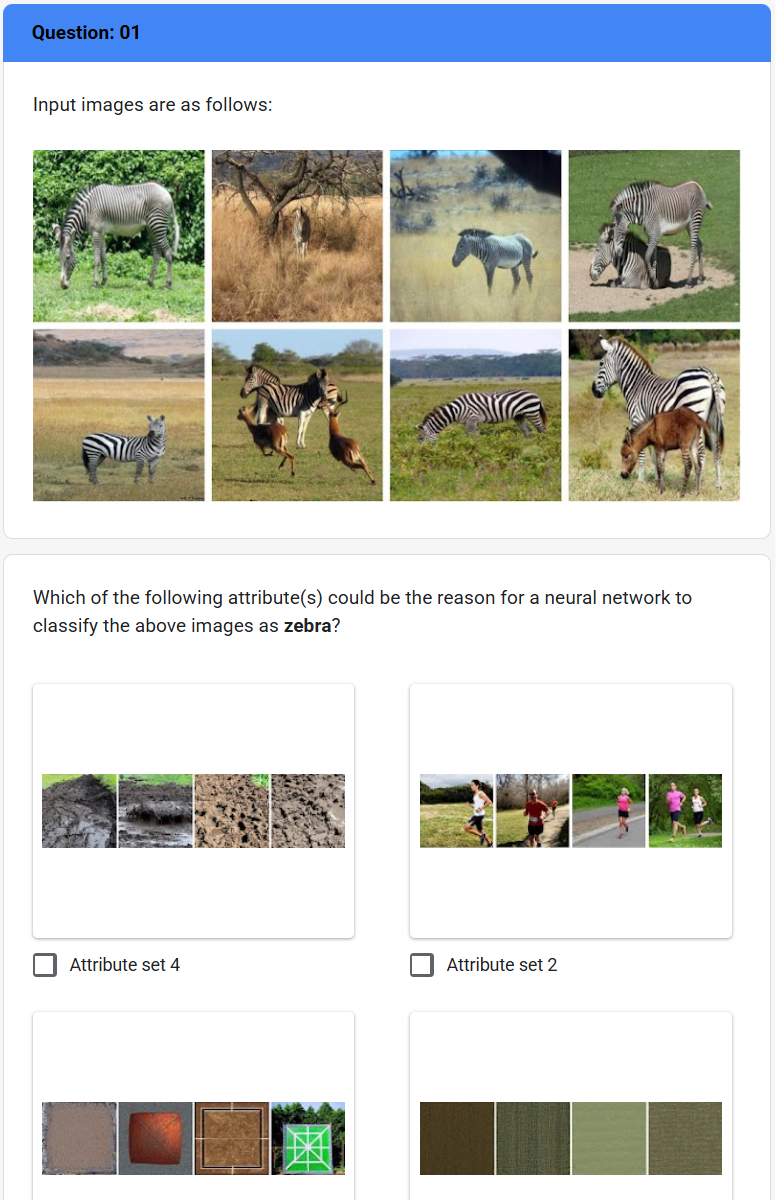}
    \includegraphics[width=0.45\textwidth]{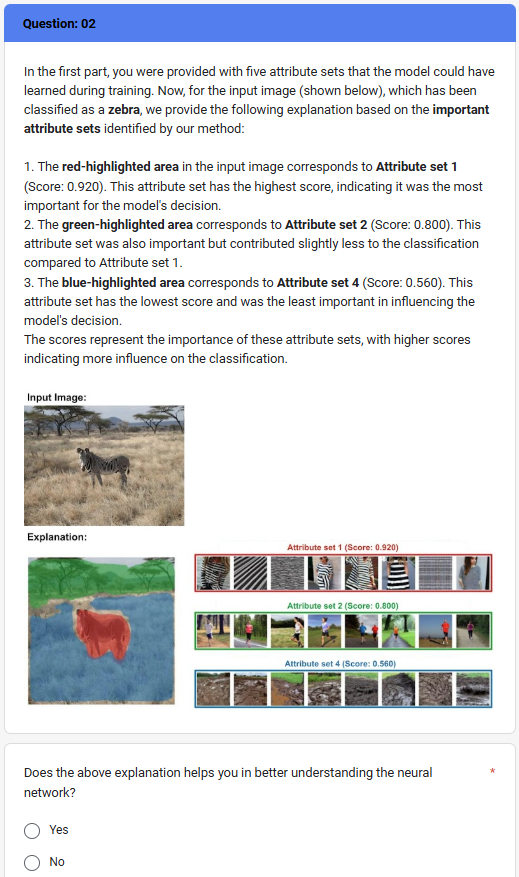}
    \caption{Screenshots from our human survey with sample questions to validate usability.}
    \label{fig:survey_usability}
\end{figure}

\end{document}